\author{anonymous}
\definecolor{linkcolor}{RGB}{83,83,182}
\definecolor{citecolor}{RGB}{128,0,128}
\newcommand{\sqrtlasso}{$\sqrt{\mathrm{Lasso}}$\xspace}
\newcommand\numberthis{\addtocounter{equation}{1}\tag{\theequation}}  
\begin{document}
\twocolumn[
\aistatstitle{Support recovery and sup-norm convergence rates for sparse pivotal estimation}
\aistatsauthor{ Mathurin Massias${}^*$ \And Quentin Bertrand${}^*$ \And Alexandre Gramfort \And Joseph Salmon}
\aistatsaddress{
Universit\'e Paris-Saclay \\ Inria, CEA  \\ Palaiseau, France \And
Universit\'e Paris-Saclay \\ Inria, CEA  \\ Palaiseau, France \And
Universit\'e Paris-Saclay \\ Inria, CEA  \\ Palaiseau, France \And
IMAG \\ Univ. Montpellier, CNRS \\ Montpellier, France}
 ]

\begin{abstract}


In high dimensional sparse regression, pivotal estimators are estimators for which the optimal regularization parameter is independent of the noise level.
The canonical pivotal estimator is the square-root Lasso, formulated along with its derivatives as a ``non-smooth + non-smooth'' optimization problem.
Modern techniques to solve these include smoothing the datafitting term, to benefit from fast efficient proximal algorithms.
In this work we show minimax sup-norm convergence rates for non smoothed and smoothed, single task and multitask square-root Lasso-type estimators.
Thanks to our theoretical analysis, we provide some guidelines on how to set the smoothing hyperparameter, and illustrate on synthetic data the interest of such guidelines.

\end{abstract}


\section{Introduction}
\label{sec:introduction}
Since the mid 1990's and the development on the Lasso \citep{Tibshirani96}, a vast literature has been devoted to sparse regularization for high dimensional regression.
Statistical analysis of the Lasso showed that it achieves optimal rates (up to log factor, \citealt{Bickel_Ritov_Tsybakov09}); see also \cite{Buhlmann_vandeGeer11} for an extensive review.
Yet, this estimator requires a specific calibration to achieve such an appealing rate: the regularization parameter must be proportional to the noise level. 
This quantity is generally unknown to the practitioner, hence the development of methods which are adaptive \wrt the noise level.
An interesting candidate with such a property is the square-root Lasso (\sqrtlasso, \citealt{Belloni_Chernozhukov_Wang11}) defined for an observation vector $y \in \bbR^n$, a design matrix $X \in \bbR^{n\times p}$ and a regularization parameter $\lambda$ by
\begin{problem}\label{pb:sqrt}
  \argmin_{\beta \in \bbR^p}
  \frac{1}{\sqrt{n}} \normin{y - X\beta}_2 + \lambda \normin{\beta}_1 \enspace.
\end{problem}
It has been shown to be \emph{pivotal} with respect to the noise level by \citet{Belloni_Chernozhukov_Wang11}: the optimal regularization parameter of their analysis does not depend on the true noise level.
This feature is also encountered in practice as illustrated by \Cref{fig:lambda_opt_noise} (see details on the framework in \Cref{sub:lambda_opt_noise}).

\begin{figure}
  \centering
  \includegraphics[width=0.48\linewidth]{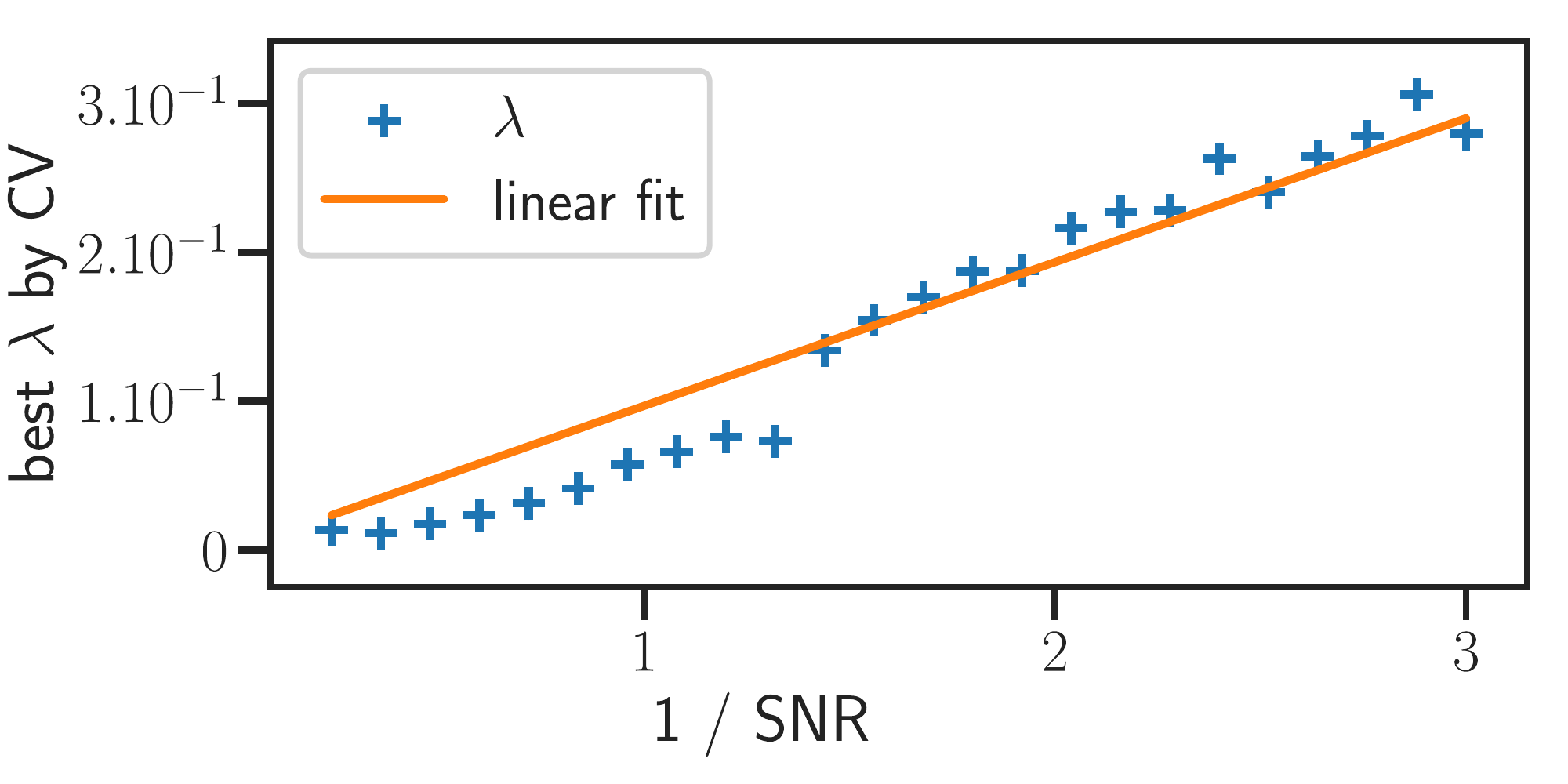}
  \includegraphics[width=0.48\linewidth]{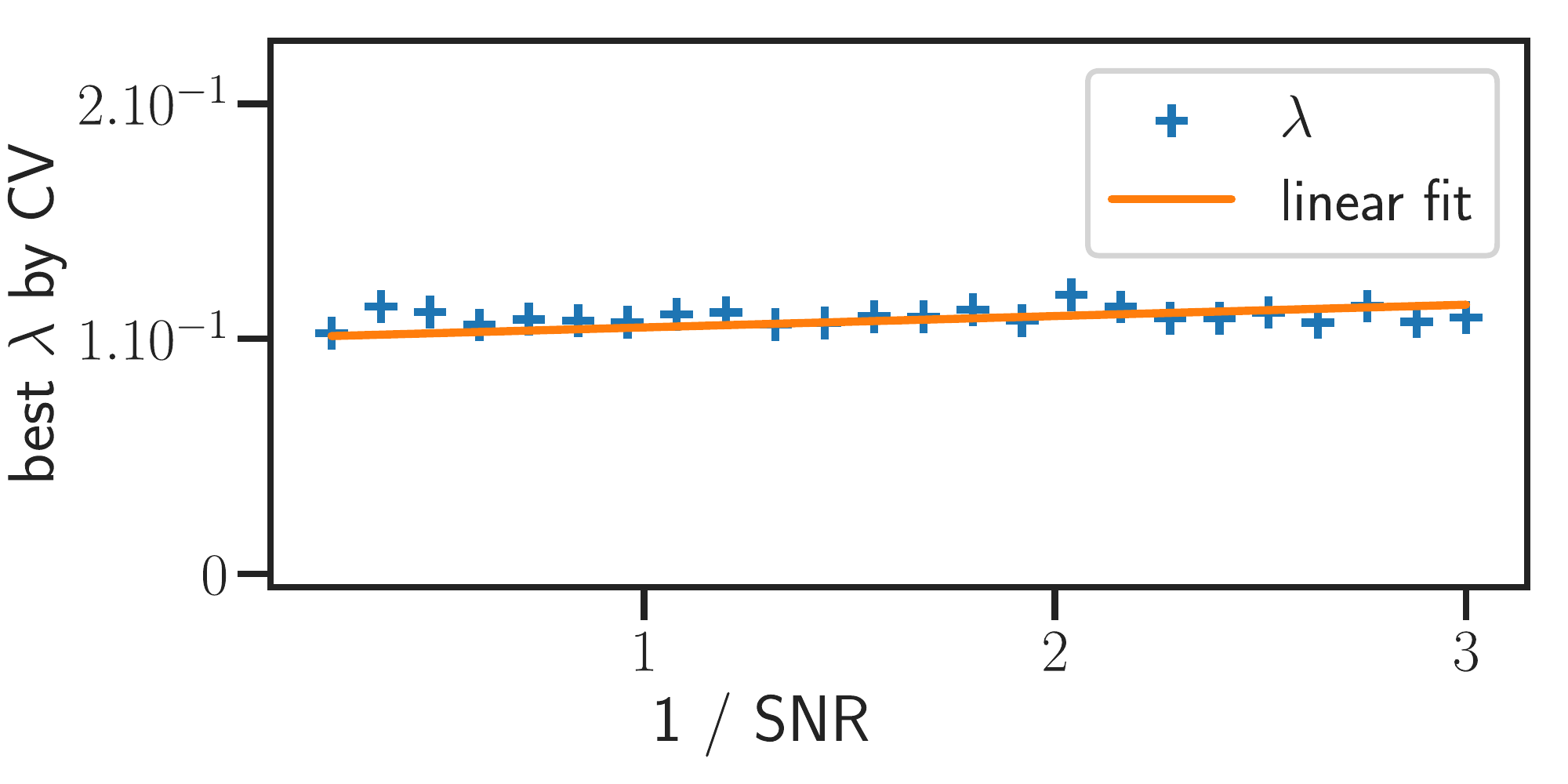}
  \caption{ Lasso (left) and square-root Lasso (right) optimal regularization parameters $\lambda$ determined by cross validation on prediction error (blue), as a function of the noise level on simulated values of $y$.
  As indicated by theory, the Lasso's optimal $\lambda$ grows linearly with the noise level, while it remains constant for the square-root Lasso.}
  \label{fig:lambda_opt_noise}
\end{figure}

Despite this theoretical benefit, solving the square-root Lasso requires tackling a ``non-smooth + non-smooth'' optimization problem.
To do so, one can resort to conic programming \citep{Belloni_Chernozhukov_Wang11} or primal-dual algorithms \citep{Chambolle_Pock11} for which practical convergence may rely on hard-to-tune hyper-parameters.
Another approach is to use variational formulations of norms, \eg expressing the absolute value as $\absin{x} = \min_{\sigma > 0} \frac{x^2}{2 \sigma} + \frac{\sigma}{2}$ (\citealt[Sec. 5.1]{Bach_Jenatton_Mairal_Obozinski12}, \citealt{Micchelli_Morales_Pontil10}).
This leads to \emph{concomitant estimation} \citep{Huber_Dutter74}, that is, optimization problems over the regression parameters and an additional variable.
In sparse regression, the seminal concomitant approach is the concomitant~Lasso~\citep{Owen07}:
\begin{problem}\label{pb:conco}
  \argmin_{\substack{\beta \in \bbR^{p }, \sigma > 0 }}
  \frac{1}{2n\sigma} \normin{y - X \beta}_{2}^2
  + \frac{\sigma}{2}
  + \lambda \norm{\beta}_{1}
  \enspace ,
\end{problem}
which yields the same estimate $\hat \beta$ as \Cref{pb:sqrt} whenever $y - X \hat \beta \neq 0$.
\Cref{pb:conco} is more amenable: it is jointly convex, and the datafitting term is differentiable.
Nevertheless, the datafitting term is still not smooth, as $\sigma$ can approach 0 arbitrarily: proximal solvers cannot be applied safely.
A solution is to introduce a constraint $\sigma \geq \sigmamin$ \citep{Ndiaye_Fercoq_Gramfort_Leclere_Salmon16}, which amounts to \emph{smoothing} \citep{Nesterov05,Beck_Teboulle12} the square-root Lasso,
\ie replacing its non-smooth datafit by a smooth approximation (see details in \Cref{sub:smoothing}).

There exist a straightforward way to generalize the square-root Lasso to the multitask setting
(observations $Y \in \bbR^{n\times q}$): the multitask square-root Lasso,
\begin{problem}\label{pb:multitask_sqrt}
    \argmin_{\Beta \in \bbR^{p\times q}}
    \frac{1}{\sqrt{nq}} \norm{Y - X \Beta}_F
    + \lambda \norm{\Beta}_{2,1}
    \enspace ,
\end{problem}
where $\norm{\Beta}_{2, 1}$ is the $\ell_1$ norm of the $\ell_2$ norms of the rows.
Another extension of the square-root Lasso to the multitask case  is the multivariate square-root Lasso\footnote{modified here with a row-sparse penalty instead of $\ell_1$}
\citep[Sec. 3.8]{vandeGeer16}:
\begin{problem}\label{pb:multivar_sqrt}
    \argmin_{\Beta \in \bbR^{p\times q}}
    \frac{1}{\sqrt{nq (n \wedge q)}} \norm{Y - X \Beta}_*
    + \lambda \norm{\Beta}_{2,1}
    \enspace .
\end{problem}
It is also shown by \citet{vandeGeer16} that when $Y - X \hat \Beta$ is full rank, \Cref{pb:multivar_sqrt} also admits a concomitant formulation, this time with an additional matrix variable:
\begin{problem}\label{pb:conco_multivar}
  \argmin_{\substack{\Beta \in \bbR^{p \times q}\\ \Snoise \succ 0 }}
  \frac{1}{2nq} \normin{Y - X \Beta}_{\Snoise^{-1}}^2
  + \frac{1}{2n} \Tr(\Snoise)
  + \lambda \norm{\Beta}_{2, 1}
  \enspace .
\end{problem}
In the analysis of the square-root Lasso \eqref{pb:sqrt}, the non-differentiability at $0$ can be avoided by excluding the corner case where the residuals $y - X \hat\beta$ vanish.
However, analysis of the multivariate square-root Lasso through its concomitant formulation \eqref{pb:conco_multivar} has a clear weakness: it requires excluding rank deficient residuals cases, which is far from being a corner case.
As illustrated in \Cref{fig:sing_val_residuals}, the full rank assumption made by \citet[Lemma 1]{vandeGeer_Stucky16} or \citet[Rem. 1]{Molstad19} is not realistic, even for $q \geq n$ and high values of $\lambda$ (see \Cref{sec:experiments} for the setting's details).
Motivated by numerical applications, \citet{Massias_Fercoq_Gramfort_Salmon17} introduced a lower bound on the smallest eigenvalue of $S$ ($S \succeq \sigmamin \Id_n$) in \Cref{pb:conco_multivar} to circumvent this issue.
As observed by \citet[Sec 3.1]{Bertrand_Massias_Gramfort_Salmon19a}, this amounts to smoothing the nuclear norm.

Our goal is to prove sup-norm convergence rates and support recovery guarantees for the estimators introduced above, and their smoothed counterparts.
\begin{figure}
  \centering
  \includegraphics[width=0.9\linewidth]{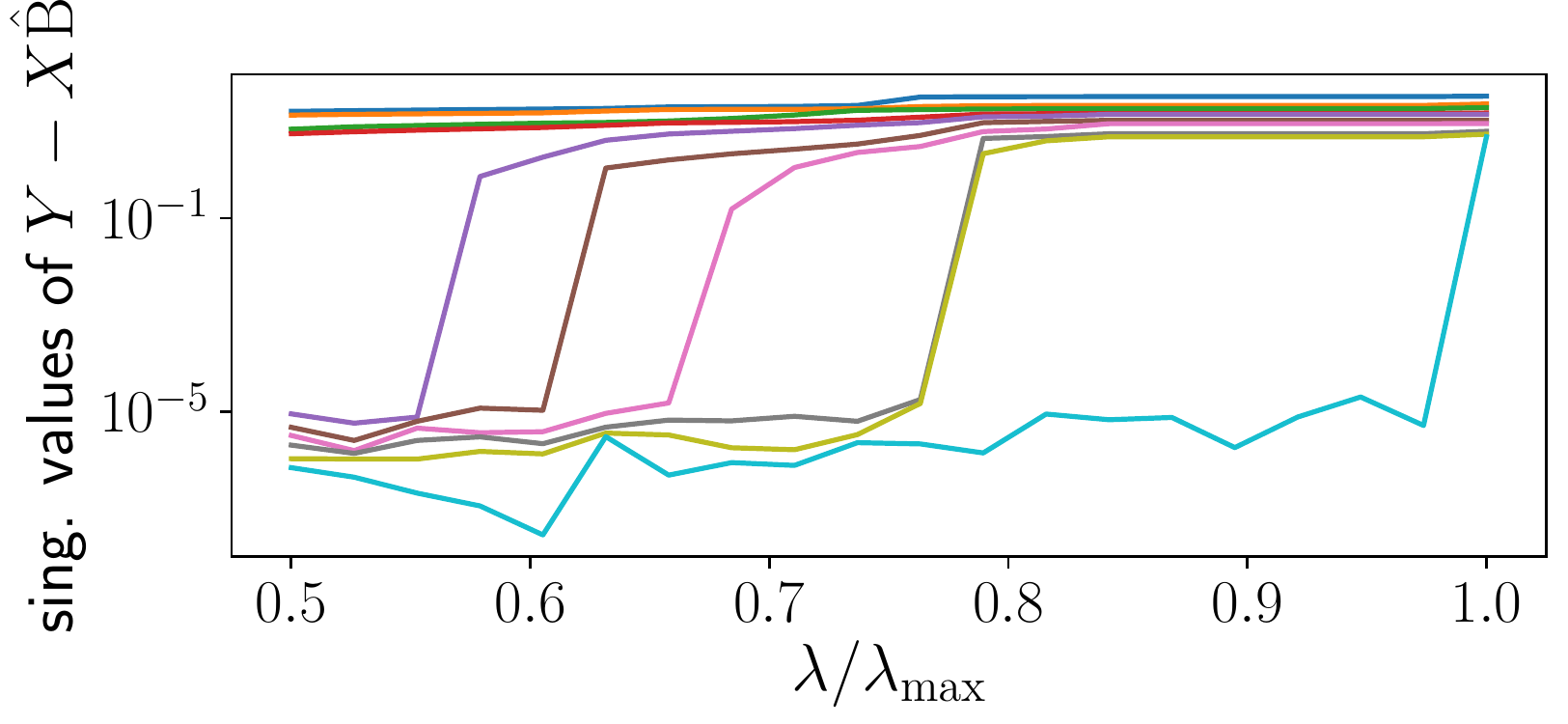}
  \caption{Singular values of the residuals $Y - X \hat \Beta$ of the multivariate square-root Lasso ($n=10, q=20, p =30$), as a function of $\lambda$. The observation matrix $Y$ is full rank, but the residuals are rank deficient even for high values of the regularization parameter, invalidating the classical assumptions needed for statistical analysis.
  }
  \label{fig:sing_val_residuals}
\end{figure}
%
%
\paragraph{Related works}
The statistical properties of the Lasso have been studied under various frameworks and assumptions.
\citet{Bickel_Ritov_Tsybakov09} showed that with high probability, $\normin{X(\hat \beta - \beta^*)}_2$ vanishes at the minimax rate (prediction convergence), whereas \citet{Lounici08} proved the sup-norm convergence and the support recovery of the Lasso (estimation convergence), \ie controlled the quantity $\normin{\hat \beta - \beta^*}_\infty$.
The latter result was extended to the multitask case by \citet{Lounici_Pontil_vandeGeer_Tsybakov11}.

Since then, other Lasso-type estimators have been proposed and studied, such as the square-root Lasso \citep{Belloni_Chernozhukov_Wang11} or the scaled Lasso \citep{Sun_Zhang12}.
In the multitask case, \citet{Liu_Wang_Zhao15} introduced the Calibrated Multivariate Regression, and \citet{vandeGeer_Stucky16,Molstad19} studied the multivariate square-root Lasso.
These estimators have been proved to converge \emph{in prediction}. 
However, apart from \citet{Bunea_Lederer_She13} for a particular group square-root Lasso, we are not aware of other works showing sup-norm convergence\footnote{of particular interest: combined with a large coefficients assumption, it implies support identification} of these estimators.
%

%
Within the framework introduced by \citet{Lounici08},
our contributions are the following:
\begin{itemize}[itemsep=0pt,topsep=0pt,partopsep=0pt]
  \item We prove sup-norm convergence and support recovery of the multitask square-root Lasso and its smoothed version.
  \item We prove sup-norm convergence and support recovery
    of the \emph{multivariate square-root Lasso} \citep[Sec. 2.2]{vandeGeer_Stucky16},
    and a smoothed version of it.
  \item Theoretical analysis leads to guidelines for the setting of the smoothing parameter $\sigmamin$.
  In particular, as soon as $\sigmamin \leq \sigma^* / \sqrt{2}$, the ``optimal'' $\lambda$ and the sup-norm bounds obtained do not depend on $\sigmamin$.
  \item We show on synthetic data the support recovery performances are little sensitive to the smoothing parameter $\sigmamin$ as long as $\sigmamin \leq \sigma^* / \sqrt{2}$.
\end{itemize}
Our contributions with respect to the existing literature are summarized in \Cref{tab:summary_estimators}.

\paragraph{Notation}
Columns and rows of matrices are denoted by $A_{:i}$ and $A_{i:}$ respectively.
For any $\Beta \in \bbR^{p \times q}$ we define $\cS(\Beta) \eqdef \condsetin{j \in [p]}{||\Beta_{j:}||_2 \neq 0}$
the row-wise support of $\Beta$.
We write $\cS_*$ for the row-wise support of the true coefficient matrix $\Beta^* \in \bbR^{p \times q}$.
For any $\Beta \in \bbR^{p \times q}$ and any subset $\cS$ of $[p]$ we denote $\Beta_\cS$ the matrix in $\bbR^{p \times q}$ which has the same values as $\Beta$ on the rows with indices in $\cS$ and vanishes on the complement $\cS^c$.
The estimated regression coefficients are written $\hat \Beta$, their difference with the true parameter $\Beta^*$ is noted $\Delta \eqdef \hat \Beta - \Beta^*$.
The residuals at the optimum are noted $\Hat \Epsilon \eqdef Y - X\hat\Beta$.
The infimal convolution between two functions $f_1$ and $f_2$ from $\bbR^d$ to $\bbR$ is denoted by $f_1\infconv f_2$ and is defined for any $x$ as $\inf\{f_1(x-y)+f_2(y) : y\in \bbR^d\}$.
For $a < b$, $[x]^b_a \eqdef \max(a, \min(x, b))$ is the clipping of $x$ at levels $a$ and $b$.
The Frobenius and nuclear norms are denoted by $\normin{\cdot}_F$ and $\normin{\cdot}_*$ respectively.
For matrices, $\normin{\cdot}_{2,1}$ and $\normin{\cdot}_{2, \infty}$ are the row wise $\ell_{2, 1}$ and $\ell_{2, \infty}$ norms, \ie respectively the sum and maximum of rows norms.
The subdifferential of a function $f$ is denoted $\partial f$, and its Fenchel conjugate is written $f^*$, equal at $u$ to $\sup_x \langle u, x \rangle - f(x)$.
For a symmetric definite positive matrix $S$, $\normin{x}_{S} = \sqrt{\Tr x^\top S x}$.
\paragraph{Model}
Consider the multitask\footnote{Results simplify in the single task case, where $q = 1$, $\Beta = \beta \in \bbR^{p}$, $\normin{\cdot}_{2,1}=\normin{\cdot}_1$, $\normin{\cdot}_{2,\infty}=\normin{\cdot}_\infty$.
We state these simpler results in \Cref{app_sec:single_case}.
}
linear regression model:
\begin{model}\label{eq:linear_model}
  Y = X \Beta^{*} + \Epsilon \enspace ,
\end{model}
where $Y \in \bbR^{n \times q}$, $X \in \bbR^{n \times p}$ is the deterministic design  matrix, $\Beta^* \in \bbR^{p \times q}$ are the true regression coefficients and $\Epsilon\in \bbR^{n \times q}$ models a centered noise.

%

For an estimator $\Betahat$ of $\Beta^*$, we aim at controlling $\normin{\Betahat-\Beta^*}_{2,\infty}$ with high probability,
and showing support recovery guarantees provided the non-zero coefficients are large enough.
To prove such results, the following assumptions are classical: Gaussianity and independence of the noise,
and \emph{mutual incoherence}.
\begin{assumption}\label{assum:gauss_noise}
    The entries of $\Epsilon$ are \iid $\cN(0, {\sigma^{*}}^2)$ random variables.
\end{assumption}
\begin{assumption}[Mutual incoherence] \label{assum:mut_inco}
    The \emph{Gram matrix} \mbox{$\Psi \eqdef \frac{1}{n} X^{\top} X$} satisfies
    \begin{align}
        \Psi_{j j}=1 \enspace,
        \text{ and }
        \max _{j' \neq j} \left|\Psi_{j j'}\right| \leq \tfrac{1}{7 \alpha s}, \, \forall j \in [p]
        \enspace ,
    \end{align}
    for some integer $s  \geq 1$ and some constant $\alpha > 1$.
\end{assumption}
Mutual incoherence of the design matrix (\Cref{assum:mut_inco}) implies the Restricted Eigenvalue Property introduced by \citet{Bickel_Ritov_Tsybakov09}.
\begin{lemma}[{Restricted Eigenvalue Property, \citet[Lemma 2]{Lounici08}}]\label{lem:rep}
    If \Cref{assum:mut_inco} is satisfied, then:
    \begin{align}
         \min_{\substack{\cS \subset [p]\\|\cS| \leq s}} \, \,
         \min_{\substack{\Delta \neq 0 \\ \norm{\Delta_{\cS^c}}_{2, 1} \leq
         3 \norm{\Delta_{\cS}}_{2, 1}}}
         \frac{1}{\sqrt{n}}\frac{\norm{X \Delta}_F}{\norm{\Delta_{\cS}}_F}
         \geq \sqrt{1-\tfrac{1}{\alpha}}
         >0 \enspace.
    \end{align}
In particular, with the choice
    $\Delta \eqdef \Betahat - \Betastar$, if $\norm{\Delta_{\cS_*^c}}_{2,1}
    \leq
    3 \norm{\Delta_{\cS_*}}_{2, 1}$, the following bound holds:
\begin{align}\label{eq:sparse_conditionning}
    \frac{1}{n} \norm{X \Delta}_F^2
    \geq
    \left(1 - \frac{1}{\alpha}\right)\norm{\Delta_{\cS_*}}_F^2
    \enspace.
 \end{align}
\end{lemma}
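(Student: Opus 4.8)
The plan is to lower-bound $\frac1n\normin{X\Delta}_F^2 = \Tr(\Delta^\top \Psi \Delta)$ directly from the entrywise control on $\Psi$, rather than through any abstract eigenvalue estimate. Writing the quadratic form over the rows of $\Delta$ and using $\Psi_{jj}=1$, I would start from
\begin{equation*}
  \frac1n \normin{X\Delta}_F^2 = \normin{\Delta}_F^2 + \sum_{j \neq j'} \Psi_{jj'} \langle \Delta_{j:}, \Delta_{j':}\rangle \enspace,
\end{equation*}
and then split the off-diagonal sum according to whether the indices lie in $\cS$ or in $\cS^c$: a block $P$ over $\cS\times\cS$, a (symmetric) cross block $Q$ over $\cS\times\cS^c$, and a block $T$ over $\cS^c\times\cS^c$. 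The diagonal term is rewritten as $\normin{\Delta}_F^2 = \normin{\Delta_\cS}_F^2 + \normin{\Delta_{\cS^c}}_F^2$.

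I would control $P$ and $Q$ with the incoherence bound $|\Psi_{jj'}|\le \tfrac{1}{7\alpha s}$ and Cauchy--Schwarz on the row norms $a_j \eqdef \normin{\Delta_{j:}}_2$. For $P$ this gives $|P| \le \tfrac{1}{7\alpha s}\normin{\Delta_\cS}_{2,1}^2$, and since $|\cS|\le s$ we have $\normin{\Delta_\cS}_{2,1}^2 \le s\normin{\Delta_\cS}_F^2$, hence $|P|\le \tfrac{1}{7\alpha}\normin{\Delta_\cS}_F^2$. For $Q$ the same reasoning, together with the cone condition $\normin{\Delta_{\cS^c}}_{2,1}\le 3\normin{\Delta_\cS}_{2,1}$, produces the factor $2\times 3$, i.e. $|Q|\le \tfrac{6}{7\alpha}\normin{\Delta_\cS}_F^2$.

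The crux is the block $T$: a brute-force Cauchy--Schwarz there would yield a factor $3^2=9$ and force the incoherence level all the way down to $\tfrac{1}{16\alpha s}$, inconsistent with the stated $\tfrac{1}{7\alpha s}$. The fix I would use is to recognize $T$ as the off-diagonal part of a genuine quadratic form: $\frac1n\normin{X\Delta_{\cS^c}}_F^2 = \normin{\Delta_{\cS^c}}_F^2 + T \ge 0$, whence $T \ge -\normin{\Delta_{\cS^c}}_F^2$. This negative lower bound exactly cancels the $\normin{\Delta_{\cS^c}}_F^2$ coming from the diagonal, and assembling the three blocks gives
\begin{equation*}
  \frac1n\normin{X\Delta}_F^2 \ge \normin{\Delta_\cS}_F^2 - \tfrac{1}{7\alpha}\normin{\Delta_\cS}_F^2 - \tfrac{6}{7\alpha}\normin{\Delta_\cS}_F^2 = \left(1 - \tfrac{1}{\alpha}\right)\normin{\Delta_\cS}_F^2 \enspace,
\end{equation*}
where the constant $7 = 1 + 2\cdot 3$ is precisely what the level $\tfrac{1}{7\alpha s}$ was calibrated to cancel. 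Taking square roots yields the ratio bound, with strict positivity from $\alpha>1$; as nothing beyond $|\cS|\le s$ and the cone inclusion is used, the estimate is uniform over both minimizations. The ``in particular'' inequality \eqref{eq:sparse_conditionning} is then just the specialization $\cS=\cS_*$, $\Delta=\Betahat-\Betastar$, read off before taking the square root.

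The main obstacle is exactly this $\cS^c\times\cS^c$ block: all the constant-chasing hinges on replacing its naive bound by the nonnegativity of $\frac1n\normin{X\Delta_{\cS^c}}_F^2$, so that the contributions of the within-support and cross blocks add up to the calibrated $7$ rather than a larger constant.
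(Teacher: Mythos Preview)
The paper does not give its own proof of this lemma: it is stated with attribution to \citet[Lemma~2]{Lounici08} and used as a black box thereafter, so there is nothing in the paper to compare your argument against. Your proof is correct and is essentially the classical argument behind that cited result, transported to the multitask/Frobenius setting: expand $\tfrac1n\normin{X\Delta}_F^2=\langle\Delta,\Psi\Delta\rangle$, split the off-diagonal into the $\cS\times\cS$, $\cS\times\cS^c$, and $\cS^c\times\cS^c$ blocks, bound the first two via incoherence, Cauchy--Schwarz and the cone condition, and absorb the last block through the nonnegativity of $\tfrac1n\normin{X\Delta_{\cS^c}}_F^2$ (equivalently, drop $\tfrac1n\normin{X\Delta_{\cS^c}}_F^2$ in the expansion $\tfrac1n\normin{X\Delta}_F^2=\tfrac1n\normin{X\Delta_\cS}_F^2+2\langle\Delta_\cS,\Psi\Delta_{\cS^c}\rangle+\tfrac1n\normin{X\Delta_{\cS^c}}_F^2$). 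Your observation that this last step is what makes the constant come out as $7=1+2\cdot3$ rather than $16$ is exactly the point.
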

%
\subsection{Motivation and general proof structure}
\label{sub:motivation_structure}
%
\paragraph{Structure of all proofs}
We prove results of the following form for several estimators $\hat \Beta$ (summarized in \Cref{tab:summary_estimators}): for some parameter $\lambda$ independent of the noise level $\sigma^*$, with high probability,
\begin{align}
  \frac{1}{q} \normin{\hat \Beta - \Beta^*}_{2, \infty}
      \leq  C  \frac{1}{\sqrt{nq}} \sqrt{\frac{\log p}{q}} \sigma^*
       \enspace .
\end{align}
Then, assuming a signal strong enough such that
\begin{equation}
  \min_{j \in \cS^*} \tfrac{1}{q} \normin{\Beta_{j:}^*}_{2}
  >
  2 C  \frac{1}{\sqrt{nq}} \sqrt{\frac{\log p}{q}} \sigma^*
  \enspace,
\end{equation}
on the same event, for some $\eta > 0$, 
\begin{equation}
  \hat \cS \eqdef \{
    j \in [p] :
    \tfrac{1}{q} \normin{\hat \Beta_{j:}}_{2}
      > C (3 + \eta) \lambda \sigma^*
  \} \enspace
\end{equation}
matches the true sparsity pattern:
  $\hat \cS = \cS^*$  .

We explain here the general sketch proofs for all the estimators.
We assume that \Cref{assum:mut_inco} holds and then place ourselves on an event $\cA$ such that $\normin{X^\top Z}_{2, \infty} \leq \lambda / 2$ (for a $Z \in \partial f (\Epsilon)$, where $f$ is the datafitting term) in order to use \Cref{lem:bound_delta_psidelta}, which links the control of $\normin{\Psi (\hat \Beta - \Beta^*)}_{2, \infty}$ to the control of $\normin{ \hat \Beta - \Beta^*}_{2, \infty}$.
To obtain sup-norm convergence it remains for each estimator to:
\begin{itemize}[itemsep=0pt,partopsep=0pt,topsep=0pt,partopsep=0pt]
  \item control the probability of the event $\cA$ with classical concentration inequalities.
  \item control the quantity $\normin{\Psi (\hat \Beta - \Beta^*)}_{2, \infty}$, with:
  \begin{itemize}
      \item first order optimality conditions, which provide a bound on $\normin{X^\top Z}_{2, \infty}$: $\normin{X^\top \hat Z}_{2, \infty} \leq \lambda$ for a $\hat Z \in \partial f(\hat \Epsilon)$,
      \item the definition of the event $\cA$,
      \item
      for some estimators, an additional assumption (\Cref{assum:high_noise}).
  \end{itemize}
\end{itemize}
Next, we detail the lemmas used in this strategy.
\subsection{Preliminary lemma}

%
We now provide conditions leading to $\normin{\Delta_{\cS_*^c}}_{2, 1} \leq  3 \norm{\Delta_{\cS_*}}_{2, 1}$, to be able to apply \Cref{lem:rep}.
In this section we consider estimators of the form
\begin{problem} \label{pb:generic_optim_pb}
    \Betahat
    \eqdef
    \argmin_{\Beta \in \bbR^{p \times q}}
    f(Y - X \Beta) + \lambda \norm{\Beta}_{2, 1}
    \enspace ,
\end{problem}
for a proper, lower semi-continuous and convex function $f: \bbR^{n \times q} \to \bbR$ (see the summary in \Cref{tab:summary_estimators}).

{\centering
\begin{table*}[t]
  \caption{Summary of estimators (MT: multitask, MV: multivariate)}
  \label{tab:summary_estimators}
  \centering
  \begin{tabular}{lccccc}
    \toprule
     Name
     & $f(\Epsilon)$
     & Sup-norm cvg
     & Pred. cvg\\
    \midrule
    MT \sqrtlasso \eqref{pb:multitask_sqrt}
    & $\tfrac{1}{\sqrt{nq}}\normin{\Epsilon}_F$
    & \citet{Bunea_Lederer_She13}
    & \citet{Bunea_Lederer_She13} \\
    MT concomitant Lasso
    &  $\displaystyle \min_{\sigma > 0}
    \tfrac{1}{2nq\sigma}\normin{\Epsilon}_F^2 + \tfrac{\sigma}{2}$
    & us
    & \citet{Li_Haupt_Arora_Liu_Hong_Zhao16b}
     \\
    MT smooth. conco. Lasso \eqref{pb:multitask_smoothed_sqrt}
    &  $\displaystyle \min_{\sigma > \sigmamin}
    \tfrac{1}{2nq\sigma}\normin{\Epsilon}_F^2 + \tfrac{\sigma}{2}$
    & us
    & \citet{Li_Haupt_Arora_Liu_Hong_Zhao16b}
    \\
    \midrule
    MV \sqrtlasso \eqref{pb:multivar_sqrt}
    & $\tfrac{1}{n} \normin{\Epsilon / \sqrt{q}}_*$
    & us
    & \citet{Molstad19}
    \\
    MV conco. \sqrtlasso \eqref{pb:conco_multivar}
    & $\displaystyle \min_{S \succ 0}
      \tfrac{1}{2nq}\normin{\Epsilon}_{S^{-1}}^2 + \tfrac{1}{2n}\Tr(S)$
    & us
    & \citet{Molstad19}
    \\
    MV SGCL \eqref{pb:smoothed_multivar_sqrt}
    & $\displaystyle \min_{\sigmamax \succeq S \succeq \sigmamin}
      \tfrac{1}{2nq}\normin{\Epsilon}_{S^{-1}}^2 + \tfrac{1}{2n}\Tr(S)$
    & us
    &
    \\
    \bottomrule
  \end{tabular}
\end{table*}
}
Fermat's rule for \Cref{pb:generic_optim_pb} reads:
\begin{align}\label{eq:kkt}
     0 \in X^\top \partial f (\hat \Epsilon)
     + \lambda  \partial \normin{\cdot}_{2, 1}(\Betahat) \enspace,
\end{align}
Hence, we can find $\hat Z \in \partial f (\hat \Epsilon)$ such that
\begin{equation} \label{eq:generic_danztig_constraint}
  \normin{X^\top \hat Z}_{2, \infty} \leq \lambda \enspace.
\end{equation}
\begin{restatable}{lemma}{lemmacompsupport}
    Consider an estimator based on \Cref{pb:generic_optim_pb}, and assume that there exists $Z \in \partial f (\Epsilon)$ such that $\normin{X^\top Z}_{2, \infty} \leq \lambda / 2$.
    Then:
    \begin{lemmaenum}[partopsep=0pt,topsep=0pt,parsep=0pt]
        \item $\norm{\Delta_{\cS_*^c}}_{2, 1} \leq 3 \norm{\Delta_{\cS_*}}_{2, 1} \enspace,$  \label{lem:bds_S_c_S}
        \item \label{lem:bound_delta_psidelta}
          if $\Psi$ and $\alpha$ satisfy \Cref{assum:mut_inco},
          \vspace{-3mm}
          \begin{equation}
          \norm{\Delta}_{2, \infty}  \leq
          \left(1 + \frac{16}{7(\alpha - 1) } \right) \norm{\Psi \Delta}_{2, \infty} \enspace .\nonumber
         \end{equation}
    \end{lemmaenum}
\end{restatable}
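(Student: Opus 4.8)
The plan is to prove the two parts in sequence. For the first part, I would start from the optimality condition and compare the objective value at $\hat\Beta$ with its value at $\Beta^*$. Since $\hat\Beta$ is a minimizer of $f(Y-X\Beta)+\lambda\|\Beta\|_{2,1}$, we have

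\begin{equation}
f(\hat\Epsilon)+\lambda\|\hat\Beta\|_{2,1} \leq f(\Epsilon)+\lambda\|\Beta^*\|_{2,1} \enspace.
\end{equation}

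Because $f$ is convex, for any $Z\in\partial f(\Epsilon)$ the subgradient inequality gives $f(\hat\Epsilon)\geq f(\Epsilon)+\langle Z,\hat\Epsilon-\Epsilon\rangle = f(\Epsilon)+\langle Z,-X\Delta\rangle$. Substituting and rearranging yields $\lambda(\|\hat\Beta\|_{2,1}-\|\Beta^*\|_{2,1}) \leq \langle Z, X\Delta\rangle = \langle X^\top Z,\Delta\rangle$. I would then bound the right-hand side via the dual pairing $\langle X^\top Z,\Delta\rangle \leq \|X^\top Z\|_{2,\infty}\|\Delta\|_{2,1}\leq \tfrac{\lambda}{2}\|\Delta\|_{2,1}$, using the hypothesis $\|X^\top Z\|_{2,\infty}\leq\lambda/2$.

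\emph{The main obstacle} is the combinatorial bookkeeping on the support. I would decompose the $\ell_{2,1}$ norms over $\cS_*$ and $\cS_*^c$, using that $\Beta^*$ vanishes on $\cS_*^c$ so that $\hat\Beta_{\cS_*^c}=\Delta_{\cS_*^c}$, and apply the reverse triangle inequality on $\cS_*$: concretely $\|\hat\Beta\|_{2,1}-\|\Beta^*\|_{2,1}\geq \|\Delta_{\cS_*^c}\|_{2,1}-\|\Delta_{\cS_*}\|_{2,1}$. Combining with the bound above gives $\lambda(\|\Delta_{\cS_*^c}\|_{2,1}-\|\Delta_{\cS_*}\|_{2,1})\leq \tfrac{\lambda}{2}(\|\Delta_{\cS_*}\|_{2,1}+\|\Delta_{\cS_*^c}\|_{2,1})$, and after dividing by $\lambda$ and collecting terms one obtains $\tfrac{1}{2}\|\Delta_{\cS_*^c}\|_{2,1}\leq\tfrac{3}{2}\|\Delta_{\cS_*}\|_{2,1}$, which is exactly $\|\Delta_{\cS_*^c}\|_{2,1}\leq 3\|\Delta_{\cS_*}\|_{2,1}$.

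For the second part I would work row by row. Writing $\Psi\Delta = \Delta + (\Psi-\Id)\Delta$ since $\Psi_{jj}=1$, I bound for each row $j$ the quantity $\|\Delta_{j:}\|_2 \leq \|(\Psi\Delta)_{j:}\|_2 + \|((\Psi-\Id)\Delta)_{j:}\|_2$. The off-diagonal term is controlled using \Cref{assum:mut_inco}: each off-diagonal entry of $\Psi$ is at most $\tfrac{1}{7\alpha s}$ in absolute value, so $\|((\Psi-\Id)\Delta)_{j:}\|_2 \leq \tfrac{1}{7\alpha s}\sum_{j'\neq j}\|\Delta_{j':}\|_2\leq \tfrac{1}{7\alpha s}\|\Delta\|_{2,1}$. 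Taking the maximum over $j$ gives $\|\Delta\|_{2,\infty}\leq \|\Psi\Delta\|_{2,\infty}+\tfrac{1}{7\alpha s}\|\Delta\|_{2,1}$. The final ingredient is to convert the $\|\Delta\|_{2,1}$ term back into an $\|\Delta\|_{2,\infty}$ term: using part (\ref{lem:bds_S_c_S}), $\|\Delta\|_{2,1}=\|\Delta_{\cS_*}\|_{2,1}+\|\Delta_{\cS_*^c}\|_{2,1}\leq 4\|\Delta_{\cS_*}\|_{2,1}\leq 4|\cS_*|\,\|\Delta\|_{2,\infty}\leq 4s\|\Delta\|_{2,\infty}$ since $|\cS_*|\leq s$. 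Plugging this in yields $\|\Delta\|_{2,\infty}\leq\|\Psi\Delta\|_{2,\infty}+\tfrac{4s}{7\alpha s}\|\Delta\|_{2,\infty}=\|\Psi\Delta\|_{2,\infty}+\tfrac{4}{7\alpha}\|\Delta\|_{2,\infty}$, and solving for $\|\Delta\|_{2,\infty}$ gives $\|\Delta\|_{2,\infty}\leq\tfrac{1}{1-4/(7\alpha)}\|\Psi\Delta\|_{2,\infty}$. A short algebraic simplification of $\tfrac{1}{1-4/(7\alpha)}=\tfrac{7\alpha}{7\alpha-4}$ should match the stated constant $1+\tfrac{16}{7(\alpha-1)}$ up to the precise constant arising from tracking the sharpest bound on $\|\Delta\|_{2,1}$; I would double-check whether the factor $4$ or a slightly different constant (stemming from $\|\Delta_{\cS_*}\|_{2,1}\leq\sqrt{s}\|\Delta_{\cS_*}\|_F$ versus the cruder $\leq s\|\Delta\|_{2,\infty}$) is what reproduces the claimed coefficient.
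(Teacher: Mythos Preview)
Your argument for part~(a) is essentially identical to the paper's: optimality of $\hat\Beta$, subgradient inequality for $f$, H\"older duality $\langle X^\top Z,\Delta\rangle\le \|X^\top Z\|_{2,\infty}\|\Delta\|_{2,1}$, and the support decomposition with the reverse triangle inequality.

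For part~(b) your route genuinely differs from the paper's. Both arguments start from the row-wise identity $\|\Delta\|_{2,\infty}\le \|\Psi\Delta\|_{2,\infty}+\tfrac{1}{7\alpha s}\|\Delta\|_{2,1}$. The paper then bounds $\|\Delta\|_{2,1}$ \emph{directly in terms of} $\|\Psi\Delta\|_{2,\infty}$, going through the Restricted Eigenvalue Property: it uses $\tfrac{1}{n}\|X\Delta\|_F^2\ge(1-\tfrac{1}{\alpha})\|\Delta_{\cS_*}\|_F^2$ together with $\tfrac{1}{n}\|X\Delta\|_F^2=\langle\Delta,\Psi\Delta\rangle\le 4\sqrt{s}\,\|\Delta_{\cS_*}\|_F\,\|\Psi\Delta\|_{2,\infty}$ to obtain $\|\Delta\|_{2,1}\le \tfrac{16s\alpha}{\alpha-1}\|\Psi\Delta\|_{2,\infty}$, and plugging this in yields exactly $1+\tfrac{16}{7(\alpha-1)}$. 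You instead bound $\|\Delta\|_{2,1}\le 4s\,\|\Delta\|_{2,\infty}$ and solve the resulting self-referential inequality, obtaining $\|\Delta\|_{2,\infty}\le \tfrac{7\alpha}{7\alpha-4}\|\Psi\Delta\|_{2,\infty}$. Your approach is more elementary (it never invokes the REP) and your constant is in fact \emph{strictly sharper}: for every $\alpha>1$ one has $\tfrac{7\alpha}{7\alpha-4}=1+\tfrac{4}{7\alpha-4}<1+\tfrac{16}{7(\alpha-1)}$. So your worry at the end is misplaced: you will not reproduce the paper's constant exactly, but since your bound is tighter, the stated inequality follows a fortiori. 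The paper's route, on the other hand, yields the intermediate $\ell_{2,1}$ and Frobenius bounds on $\Delta$ as byproducts, which are of independent use elsewhere.
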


\begin{proof} For \Cref{lem:bds_S_c_S}, we use the minimality of $\Betahat$:
  \begin{align*}
    f(\hat\Epsilon) - f(\Epsilon)
    & \leq
     \lambda \normin{\Betastar}_{2, 1} - \lambda \normin{\Betahat}_{2, 1} \enspace .
     \numberthis
    \label{eq:general_minimality}
\end{align*}
  We upper bound the right hand side of \Cref{eq:general_minimality}, using
  $\normin{\Betahat}_{2, 1} =  \normin{\Betahat_{\cS_*}}_{2, 1} + \normin{\Betahat_{\cS_*^c}}_{2, 1}$,
  $\Betastar_{{\cS_*^c}} = 0$ and with the triangle inequality:
  \begin{align*}
    \normin{\Betastar}_{2, 1} - \normin{\Betahat}_{2, 1}
    &=
    \normin{\Betastar_{\cS_*}}_{2, 1}
    - \normin{\Betahat_{\cS_*}}_{2, 1} - \normin{\Betahat_{{\cS_*^c}}}_{2, 1} \\
    &=
    \normin{\Betastar_{\cS_*}}_{2, 1}
    - \normin{\Betahat_{\cS_*}}_{2, 1}
    - \normin{\Delta_{{\cS_*^c}}}_{2, 1}
     \\
    &\leq
    \normin{(\Betastar - \Betahat)_{\cS_*}}_{2, 1}
    - \normin{\Delta_{\cS_*^c}}_{2, 1}   \\
    & \leq \normin{\Delta_{\cS_*}}_{2, 1}
    - \normin{\Delta_{\cS_*^c}}_{2, 1} \numberthis
    \enspace . \label{eq:general_upper_bound}
\end{align*}
  We now aim at finding a lower bound of the left hand side of \Cref{eq:general_minimality}.
  By convexity of $f$, $\partial f(\Epsilon) \neq \varnothing$.
  Picking $Z \in \partial f(\Epsilon)$ such that $\normin{X^\top Z}_{2, \infty} \leq \frac{\lambda}{2}$ yields:
  \begin{align*}
      f(Y - X \Betahat) -  f(Y - &X \Beta^*)
    \geq
    - \left \langle Z, X (\Betahat - \Betastar)
    \right \rangle
    \enspace \\
    &\geq
      - \left \langle X^\top Z,  \Delta \right \rangle  \\
    &\geq
      - \norm{X^\top  Z}_{2, \infty} \norm{\Delta}_{2, 1}
    \\
    &\geq
    - \frac{1}{2} \lambda \norm{\Delta}_{2, 1}
    \enspace . \label{eq:general_lower_bound}
\end{align*}
  Combining \Cref{eq:general_minimality,eq:general_upper_bound,eq:general_lower_bound} leads to:
  \begin{align}
    - \frac{1}{2} \norm{\Delta}_{2, 1}
    &\leq \normin{\Delta_{\cS_*}}_{2, 1} - \normin{\Delta_{\cS_*^c}}_{2, 1}
    \nonumber\\
    \norm{\Delta_{\cS_*^c}}_{2, 1}
    &\leq
     3 \norm{\Delta_{\cS_*}}_{2, 1}
    \enspace .
  \end{align}
Proof of \Cref{lem:bound_delta_psidelta} is a direct application of \Cref{lem:bnd_l_infty_l_infty,lem:bds_S_c_S}.
\end{proof}
Equipped with these Assumptions and Lemmas, we will show that the considered estimators reach the minimimax lower bounds, which we recall in the following.
\subsection{Minimax lower bounds}
%
As said in \Cref{sub:motivation_structure}, our goal is to provide convergence rates on the quantity $\normin{\hat \Beta - \Beta^*}_{2, \infty}$.
To show that our bounds are ``optimal'' we recall that the considered estimators achieve minimax rate (up to a logarithmic factor).
Indeed, under some additional assumptions controlling the conditioning of the design matrix, one can show \citep{Lounici_Pontil_vandeGeer_Tsybakov11} minimax lower bounds.
%
\begin{assumption}\label{assum:design_matrix_X}
  For all $\Delta \in \bbR^{p \times q} \backslash \{ 0 \}$ such that $|\cS(\Delta) | \leq  2 | \cS^* |$:
  \begin{equation}
    \kappamin \leq
    \frac{\norm{X \Delta}_F^2}{n \norm{\Delta}_F^2} \leq \kappamax \enspace .
  \end{equation}
\end{assumption}
Provided \Cref{assum:gauss_noise,assum:design_matrix_X} hold true, \citet[Thm. 6.1]{Lounici_Pontil_vandeGeer_Tsybakov11} proved the following minimax lower bound (with an absolute constant $R$):
\begin{align}
  \inf_{\hat \Beta} \sup_{\substack{\Beta^*  \st \\
  |\cS(\Beta^*)| \leq s }} \bbE \Big ( \tfrac{1}{q} \normin{\hat \Beta
  - \Beta^*}_{2, \infty} \Big )
  \geq
   \tfrac{R \sigma^* }{\kappamax \sqrt{n}}
   \sqrt{1 + \tfrac{\log(ep/s)}{q}}  \enspace . \nonumber
\end{align}
%
\subsection{Smoothing}
\label{sub:smoothing}
Some of the pivotal estimators studied here are obtained via a technique called smoothing.
For $L > 0$, a convex function $\phi$ is $L$-smooth (\ie its gradient is $L$-Lipschitz) if and only if its Fenchel conjugate $\phi^*$ is $\frac 1L$-strongly convex \citep[Thm 4.2.1]{Hiriart-Urruty_Lemarechal93b}.
Therefore, given a smooth function $\omega$, a principled way to smooth a function $f$ is to add the strongly convex $\omega^*$ to $f^*$, thus creating a strongly convex function, whose Fenchel transform is a smooth approximation of $f$.
Formally, given a smooth convex function $\omega$, the $\omega$-smoothing of $f$ is $(f^* + \omega^*)^*$.
By properties of the Fenchel transform, the latter is also equal to $f \infconv \omega$ whenever $f$ is convex \citep[Prop. 13.21]{Bauschke_Combettes11}.

\begin{proposition}\label{prop:smooth_fro}
    Let $\omega_{\sigmamin} = \frac{1}{2 \sigmamin} \normin{\cdot}_F^2 + \frac{\sigmamin}{2}$.
    The $\omega_{\sigmamin}$-smoothing of the Frobenius norm is equal to:
    \begin{align}\label{eq:smooth_fro}
        \left(\omega_{\sigmamin} \infconv \normin{\cdot}_F\right) (Z)&=
        \begin{cases}
            \normin{Z}_F \enspace, \text{ if } \normin{Z}_F \leq \sigmamin \enspace, \\
            \frac{1}{2 \sigmamin} \normin{Z}_F^2 + \frac{\sigmamin}{2} \enspace, \text{ if } \normin{Z}_F \geq \sigmamin \enspace.
        \end{cases} \nonumber
        \\
        &= \min_{\sigma \geq \sigmamin} \frac{1}{2\sigma} \norm{Z}_F^2 + \frac{\sigma}{2} \enspace.
    \end{align}
\end{proposition}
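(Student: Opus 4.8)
The plan is to establish the two equalities in \eqref{eq:smooth_fro} separately: first the easy identity between the piecewise expression and the constrained minimisation $\min_{\sigma \ge \sigmamin}$, then the less immediate identification of the $\omega_{\sigmamin}$-smoothing with either of them.

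For the second equality, fix $Z$ and set $g(\sigma) \eqdef \frac{1}{2\sigma}\normin{Z}_F^2 + \frac{\sigma}{2}$ on $(0, +\infty)$. This is strictly convex, with $g'(\sigma) = \frac{1}{2} - \frac{\normin{Z}_F^2}{2\sigma^2}$, hence decreasing on $(0, \normin{Z}_F)$ and increasing on $(\normin{Z}_F, +\infty)$; its unconstrained minimiser is $\sigma^\star = \normin{Z}_F$ with minimal value $g(\normin{Z}_F) = \normin{Z}_F$ (the familiar variational formula for the Frobenius norm). Adding the constraint $\sigma \ge \sigmamin$: if $\normin{Z}_F \ge \sigmamin$ the unconstrained minimiser is feasible and the value is $\normin{Z}_F$, whereas if $\normin{Z}_F \le \sigmamin$ the function $g$ is increasing on $[\sigmamin, +\infty)$ so the minimum is attained at the boundary $\sigma = \sigmamin$, with value $\frac{1}{2\sigmamin}\normin{Z}_F^2 + \frac{\sigmamin}{2}$. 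These are exactly the two branches, which glue continuously at $\normin{Z}_F = \sigmamin$.

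For the first equality I would use the definition of the $\omega$-smoothing as $(f^* + \omega_{\sigmamin}^*)^*$ with $f = \normin{\cdot}_F$ and compute the conjugates. Since the Frobenius norm is self-dual, $f^* = \iota_B$ is the indicator of the unit ball $B \eqdef \{V : \normin{V}_F \le 1\}$. A direct computation (the supremum being attained at $Y = \sigmamin V$) gives $\omega_{\sigmamin}^*(V) = \frac{\sigmamin}{2}\normin{V}_F^2 - \frac{\sigmamin}{2}$, so that $(f^* + \omega_{\sigmamin}^*)^*(Z) = \sup_{\normin{V}_F \le 1}\big(\langle Z, V\rangle - \frac{\sigmamin}{2}\normin{V}_F^2\big) + \frac{\sigmamin}{2}$. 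Writing $t = \normin{V}_F \in [0,1]$ and using Cauchy--Schwarz, $\sup_{\normin{V}_F = t}\langle Z, V\rangle = t\,\normin{Z}_F$, so the problem reduces to the one-dimensional concave maximisation $\max_{t \in [0,1]}\big(t\,\normin{Z}_F - \frac{\sigmamin}{2}t^2\big) + \frac{\sigmamin}{2}$, whose unconstrained maximiser is $t^\star = \normin{Z}_F/\sigmamin$. The interior case $\normin{Z}_F \le \sigmamin$ returns the quadratic branch $\frac{1}{2\sigmamin}\normin{Z}_F^2 + \frac{\sigmamin}{2}$, and the boundary case $t = 1$ (when $\normin{Z}_F \ge \sigmamin$) returns $\normin{Z}_F$, recovering precisely \eqref{eq:smooth_fro}.

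There is no serious obstacle here; the only points requiring care are the reduction of the matrix supremum to a scalar one, justified by optimising over the radius $t$ and the direction separately (equivalently, noting the optimal $V$ is aligned with $Z$), and the correct handling of the active/inactive constraint in both one-dimensional problems, ensuring the branches match at $\normin{Z}_F = \sigmamin$. As an alternative to the conjugate route, one can compute $f \infconv \omega_{\sigmamin}$ directly as $\inf_Y\{\normin{Z - Y}_F + \omega_{\sigmamin}(Y)\}$: decomposing $Y$ into components parallel and orthogonal to $Z$ shows the orthogonal part must vanish, reducing again to the same scalar problem.
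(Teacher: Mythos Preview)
Your argument is correct and complete; the paper in fact states the proposition without proof, so there is nothing to compare against beyond checking your work stands on its own. Both steps --- the reduction of $\min_{\sigma\ge\sigmamin} g(\sigma)$ to a boundary/interior dichotomy, and the explicit biconjugate computation via $f^*=\iota_{\{\normin{\cdot}_F\le 1\}}$ and $\omega_{\sigmamin}^*(V)=\tfrac{\sigmamin}{2}\normin{V}_F^2-\tfrac{\sigmamin}{2}$ --- are carried out cleanly, and the Cauchy--Schwarz reduction to a scalar problem in $t=\normin{V}_F$ is the right move.

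One remark: your computation actually exposes a typo in the piecewise display of the proposition. You correctly obtain the quadratic branch $\tfrac{1}{2\sigmamin}\normin{Z}_F^2+\tfrac{\sigmamin}{2}$ when $\normin{Z}_F\le\sigmamin$ and the linear branch $\normin{Z}_F$ when $\normin{Z}_F\ge\sigmamin$ (the Huber-type behaviour: smooth near zero, equal to the norm away from it). The cases in the stated proposition are swapped. Your result does agree with the $\min_{\sigma\ge\sigmamin}$ formula, which is the version actually used downstream in the paper, so the discrepancy is a typographical slip in the display rather than a flaw in your proof; but the phrase ``recovering precisely \eqref{eq:smooth_fro}'' is not literally accurate and would be worth flagging.
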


\section{Multitask square-root Lasso}
\label{sec:multitask_sqrt}
%
It is clear that the multitask square-root Lasso (\cref{pb:multitask_sqrt}) suffers from the same numerical weaknesses as the square-root Lasso.
A more amenable version has been introduced by \citet[Prop. 21]{Bertrand_Massias_Gramfort_Salmon19a}.
The smoothed multitask square-root Lasso is obtained by replacing the non-smooth function $\normin{\cdot}_F$ with a smooth approximation, depending on a parameter $\sigmamin > 0$:
\begin{problem}\label{pb:multitask_smoothed_sqrt}
    \argmin_{\Beta \in \bbR^{p\times q}}
     \left ( \norm{\cdot}_F \infconv \big(\tfrac{1}{2 \sigmamin} \normin{\cdot}^2 + \tfrac{\sigmamin}{2} \big) \right )
     \left (\tfrac{Y - X \Beta}{\sqrt{nq}} \right )
    + \lambda \norm{\Beta}_{2,1}
    \enspace .
\end{problem}
Plugging the expression of the smoothed Frobenius norm \eqref{eq:smooth_fro}, the problem formulation becomes:
\begin{problem}
    (\Betahat, \hat \sigma) \in \argmin_{\substack{\Beta \in \bbR^{p \times q} \\ \sigma \geq \sigmamin}}
    \frac{1}{2 n q \sigma} \normin{Y - X \Beta}_F^2 + \frac{\sigma}{2} + \lambda \norm{\Beta}_{2,1} \enspace,
\end{problem}
where the datafitting term is $(n q \sigmamin)^{-1}$-smooth \wrt $\Beta$. 
We show that estimators \eqref{pb:multitask_sqrt} and \eqref{pb:multitask_smoothed_sqrt} reach the minimax lower bound, with a regularization parameter independent of $\sigma^*$.
For that, another assumption is needed.
%
\begin{assumption}[{\citet[Lemma 3.1]{vandeGeer16}}] \label{assum:high_noise}
  There exists $\eta > 0$ verifying
  \begin{align}
    \lambda \normin{\Betastar}_{2, 1}
    \leq
    \eta \sigma^* \enspace .
  \end{align}
\end{assumption}
%
\begin{proposition}\label{prop:sqrtmtl_bds_est}
    Let $\hat \Beta$ denote the multitask square-root Lasso \eqref{pb:multitask_sqrt} or its smoothed version \eqref{pb:multitask_smoothed_sqrt}.
    Let \Cref{assum:gauss_noise} be satisfied, let $\alpha$ and $\eta$ satisfy \Cref{assum:mut_inco,assum:high_noise}.
    For $C = \big(1 + \tfrac{16}{7(\alpha - 1)}\big)$, $A > \sqrt{2}$ and
    $\lambda = \frac{2\sqrt{2}}{ \sqrt{nq}} \big (1 + A \sqrt{(\log p) / {q} } \big )$,
    if $\sigmamin \leq \tfrac{\sigma^*}{\sqrt{2}}$ then with probability at least $1 - p^{1 - A^2/2} - (1 + e^2) e^{-nq/24}$,
    \begin{equation}
        \tfrac{1}{q} \normin{\hat \Beta - \Beta^*}_{2, \infty}
            \leq  C (3 + \eta) \lambda \sigma^*
             \enspace .
    \end{equation}
    %
    Moreover provided that
    \begin{equation}
      \min_{j \in \cS^*} \tfrac{1}{q} \normin{\Beta_{j:}^*}_{2}
      >
      2 C (3 + \eta) \lambda \sigma^* \enspace,
    \end{equation}
    then, with the same probability, the estimated support
    \begin{equation}
      \hat \cS \eqdef \{
        j \in [p] :
        \tfrac{1}{q} \normin{\hat \Beta_{j:}}_{2}
          > C (3 + \eta) \lambda \sigma^*
      \} \enspace
    \end{equation}
    recovers the true sparsity pattern: $\hat \cS = \cS^*$.
\end{proposition}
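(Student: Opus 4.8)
The plan is to apply the generic strategy of \Cref{sub:motivation_structure} with $f(\Epsilon)=\tfrac{1}{\sqrt{nq}}\normin{\Epsilon}_F$ for \eqref{pb:multitask_sqrt} and its $\omega_{\sigmamin}$-smoothing for \eqref{pb:multitask_smoothed_sqrt}, treating both estimators at once. The key observation is that, by \Cref{prop:smooth_fro}, the smoothed datafit equals $\tfrac{1}{\sqrt{nq}}\normin{\cdot}_F$ as soon as its argument has norm at least $\sigmamin$, so both datafits share the gradient $Z=\Epsilon/(\sqrt{nq}\normin{\Epsilon}_F)$ at the true noise whenever $\normin{\Epsilon}_F/\sqrt{nq}\geq\sigmamin$. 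Since $\normin{\Epsilon}_F/\sqrt{nq}$ concentrates around $\sigma^*$ and $\sigmamin\leq\sigma^*/\sqrt{2}$, this holds with high probability, which is precisely what lets $\sigmamin$ drop out of the analysis. I therefore take $\cA$ to be the intersection of $\{\normin{X^\top Z}_{2,\infty}\leq\lambda/2\}$ with the two-sided control $\sigma^*\sqrt{nq/2}\leq\normin{\Epsilon}_F\leq 2\sigma^*\sqrt{nq}$, the lower bound guaranteeing both that we are in the above regime and a usable lower bound on the random denominator of $Z$.

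Next I would lower bound the probability of $\cA$. The $j$-th row of $X^\top\Epsilon$ is $X_{:j}^\top\Epsilon\in\bbR^q$ with independent $\cN(0,n\sigma^{*2})$ entries (using $\Psi_{jj}=1$), hence $\normin{X_{:j}^\top\Epsilon}_2=\sigma^*\sqrt{n}\,\chi_j$ with $\chi_j^2\sim\chi^2_q$. On the event $\normin{\Epsilon}_F\geq\sigma^*\sqrt{nq/2}$ the constraint $\normin{X^\top Z}_{2,\infty}\leq\lambda/2$ is implied by $\max_j\chi_j\leq\sqrt{q}+A\sqrt{\log p}$; the Gaussian-vector norm bound $\Pr(\chi_j\geq\sqrt{q}+t)\leq e^{-t^2/2}$ with $t=A\sqrt{\log p}$, together with a union bound over $[p]$, produces the term $p^{1-A^2/2}$. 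The two-sided control of $\normin{\Epsilon}_F=\sigma^*\sqrt{\chi^2_{nq}}$ follows from standard $\chi^2_{nq}$ concentration and accounts for the term $(1+e^2)e^{-nq/24}$.

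On $\cA$, \Cref{lem:bds_S_c_S} gives $\normin{\Delta_{\cS_*^c}}_{2,1}\leq 3\normin{\Delta_{\cS_*}}_{2,1}$ and \Cref{lem:bound_delta_psidelta} reduces the task to bounding $\normin{\Psi\Delta}_{2,\infty}$. From \eqref{eq:generic_danztig_constraint} pick $\hat Z\in\partial f(\hat\Epsilon)$ with $\normin{X^\top\hat Z}_{2,\infty}\leq\lambda$; using $X^\top\hat\Epsilon=X^\top\Epsilon-n\Psi\Delta$ and, in the Frobenius regime, $\hat Z=\hat\Epsilon/(\sqrt{nq}\normin{\hat\Epsilon}_F)$, this reads $\normin{X^\top\Epsilon-n\Psi\Delta}_{2,\infty}\leq\lambda\sqrt{nq}\normin{\hat\Epsilon}_F$, whence by the triangle inequality and $\normin{X^\top\Epsilon}_{2,\infty}\leq\tfrac{\lambda}{2}\sqrt{nq}\normin{\Epsilon}_F$ on $\cA$, $n\normin{\Psi\Delta}_{2,\infty}\leq\tfrac{\lambda}{2}\sqrt{nq}\normin{\Epsilon}_F+\lambda\sqrt{nq}\normin{\hat\Epsilon}_F$. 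To control the residuals I use minimality of $\Betahat$ and $f(\hat\Epsilon)\geq\tfrac{1}{\sqrt{nq}}\normin{\hat\Epsilon}_F$ (an equality for \eqref{pb:multitask_sqrt}, a lower bound for the smoothing), giving $\normin{\hat\Epsilon}_F\leq\normin{\Epsilon}_F+\sqrt{nq}\,\lambda\normin{\Betastar}_{2,1}\leq\normin{\Epsilon}_F+\sqrt{nq}\,\eta\sigma^*$ via \Cref{assum:high_noise}. Plugging $\normin{\Epsilon}_F\leq 2\sigma^*\sqrt{nq}$ yields $\normin{\Psi\Delta}_{2,\infty}\leq(3+\eta)\lambda q\sigma^*$, and \Cref{lem:bound_delta_psidelta} gives the announced $\tfrac{1}{q}\normin{\Delta}_{2,\infty}\leq C(3+\eta)\lambda\sigma^*$. (The quadratic regime $\normin{\hat\Epsilon}_F\leq\sqrt{nq}\sigmamin$ of the smoothed estimator only improves this, since then $\normin{X^\top\hat\Epsilon}_{2,\infty}\leq nq\sigmamin\lambda$ with $\sigmamin\leq\sigma^*/\sqrt{2}$.)

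Support recovery is then a thresholding argument with $\tau\eqdef C(3+\eta)\lambda\sigma^*$: for $j\notin\cS^*$ we have $\Betastar_{j:}=0$, so $\tfrac{1}{q}\normin{\Betahat_{j:}}_2=\tfrac{1}{q}\normin{\Delta_{j:}}_2\leq\tau$ and $j\notin\hat\cS$; for $j\in\cS^*$ the strong-signal hypothesis and the reverse triangle inequality give $\tfrac{1}{q}\normin{\Betahat_{j:}}_2\geq\tfrac{1}{q}\normin{\Betastar_{j:}}_2-\tau>\tau$, so $j\in\hat\cS$, whence $\hat\cS=\cS^*$. I expect the main obstacle to be the uniform treatment of the smoothed estimator: one must check that $\sigmamin$ disappears from both the true-noise gradient $Z$ and the optimality-based residual bound, which is exactly what $\sigmamin\leq\sigma^*/\sqrt{2}$ secures through \Cref{prop:smooth_fro}. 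A secondary difficulty, specific to pivotal estimators, is that $Z$ is a normalised Gaussian, so controlling $\normin{X^\top Z}_{2,\infty}$ amounts to bounding the ratio $\normin{X^\top\Epsilon}_{2,\infty}/\normin{\Epsilon}_F$ of dependent $\chi^2$ quantities; the two-sided bound on $\normin{\Epsilon}_F$ together with the per-coordinate $\chi^2_q$ tails is what decouples them.
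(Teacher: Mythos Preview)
Your proposal is correct and follows essentially the same route as the paper's proof: the same event $\cA_1$, the same decomposition $n\normin{\Psi\Delta}_{2,\infty}\leq\normin{X^\top\hat\Epsilon}_{2,\infty}+\normin{X^\top\Epsilon}_{2,\infty}$, the same use of optimality conditions to bound $\normin{X^\top\hat\Epsilon}_{2,\infty}$ via $\normin{\hat\Epsilon}_F$ (and $\sigmamin$ in the quadratic regime), the same minimality-plus-\Cref{assum:high_noise} bound on the residuals, and the same invocation of \Cref{lem:bound_delta_psidelta} with $Z=\Epsilon/(\sqrt{nq}\normin{\Epsilon}_F)$. The only cosmetic differences are that you treat both estimators in one pass using $f(\hat\Epsilon)\geq\tfrac{1}{\sqrt{nq}}\normin{\hat\Epsilon}_F$ (the paper separates this into \Cref{lem:link_hat_Epsilon_smooth_sqrt}), and you spell out the thresholding argument for support recovery whereas the paper just cites \citet[Cor.~4.1]{Lounici_Pontil_Tsybakov_vandeGeer09}.
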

\begin{proof}
    We first bound $\normin{\Psi \Delta}_{2, \infty}$.
    Let $\cA_1$ be the event
    \begin{equation}
      \cA_1 \eqdef \left \{ \tfrac{\normin{X^\top \Epsilon}_{2, \infty}}{\sqrt{nq}\normin{\Epsilon}_F} \leq \tfrac{\lambda}{2}   \right \}
      \cap \left \{ \tfrac{\sigma^*}{\sqrt{2}} < \tfrac{\normin{\Epsilon}_F}{\sqrt{nq}} < 2 \sigma^* \right \}
      \enspace .
    \end{equation}
    By \Cref{lem:control_A1}, $\bbP(\cA_1) \geq 1 - p^{1 - A^2/2} - (1 + e^2) e^{-nq/24} $.
    For both estimators, on $\cA_1$ we have:
      \begin{align}
        n\normin{\Psi \Delta}_{2, \infty}
        &=
        \normin{X^\top(\hat \Epsilon - \Epsilon)}_{2, \infty} \nonumber \\
        & \leq
         \normin{X^\top \hat \Epsilon}_{2, \infty}
        + \normin{X^\top \Epsilon}_{2, \infty} \nonumber \\
        & \leq
         \normin{X^\top \hat \Epsilon}_{2, \infty}
        +  \lambda n q \sigma^* \enspace ,\label{eq:bound_n_psi_delta}
      \end{align}
  hence we need to bound $\normin{X^\top \hat \Epsilon}_{2, \infty}$.
  We do so using optimality conditions, that yield for \Cref{pb:multitask_sqrt}, with $\hat \Epsilon \neq 0$,
  \begin{align}
      \normin{X^\top \tfrac{\hat \Epsilon}{\normin{\hat \Epsilon}}_{F}}_{2, \infty}
      &\leq \lambda \sqrt{nq} \nonumber\\
      \tfrac{1}{nq} \normin{X^\top \hat \Epsilon}_{2, \infty}
      &\leq \lambda \tfrac{\normin{\hat \Epsilon}_F}{\sqrt{nq}} \enspace, \label{eq:kkt_multitask_sqrt}
  \end{align}
  and the last equation is still valid if $\hat \Epsilon = 0$.
  For \Cref{pb:multitask_smoothed_sqrt}, the optimality conditions yield:
  \begin{equation}
    \begin{cases}
      \tfrac{1}{nq} \normin{X^\top \hat \Epsilon}_{2, \infty}
      \leq
      \lambda  \tfrac{\normin{\hat \Epsilon}_F}{\sqrt{nq}} \enspace,
      & \text{ if } \tfrac{\normin{\hat \Epsilon}_F}{\sqrt{nq}} \geq \sigmamin  \enspace,\\
      \tfrac{1}{nq} \normin{X^\top \hat \Epsilon}_{2, \infty}
      \leq
      \lambda \sigmamin \enspace,
      & \text{otherwise} \enspace.
    \end{cases}
  \end{equation}
  Therefore,
  \begin{equation}\label{eq:kkt_multitask_smoothed_sqrt}
      \tfrac{1}{nq} \normin{X^\top \hat \Epsilon}_{2, \infty}
      \leq
      \lambda \max \left (\tfrac{\normin{\hat \Epsilon}_F}{\sqrt{nq}}, \sigmamin \right ) \enspace .
  \end{equation}
  It now remains to bound $\normin{\hat \Epsilon}_F$ for both estimators, which is done with \Cref{assum:high_noise}:
  for \Cref{pb:multitask_sqrt}, by minimality of the estimator,
\begin{align}
  \tfrac{1}{\sqrt{nq}} \normin{\hat \Epsilon}_F
  + \lambda \normin{\hat \Beta}_{2, 1}
  &\leq
  \tfrac{1}{\sqrt{nq}} \normin{\Epsilon}_F
  + \lambda \normin{\Beta^*}_{2, 1} \nonumber \\
  \tfrac{1}{\sqrt{nq}} \normin{\hat \Epsilon}_F
  &\leq
  \tfrac{1}{\sqrt{nq}} \normin{\Epsilon}_F
  + \lambda \normin{\Beta^*}_{2, 1} \nonumber \\
  & \leq 2\sigma^* + (1 + \eta) \sigma^*  \nonumber \\
  & \leq (3 + \eta ) \sigma^*  \enspace, \label{eq:link_f_sigma}
\end{align}
and we can obtain the same bound in the case of \Cref{pb:multitask_smoothed_sqrt} (see \Cref{lem:link_hat_Epsilon_smooth_sqrt}).
%
  Combining \Cref{eq:bound_n_psi_delta,eq:kkt_multitask_sqrt,eq:kkt_multitask_smoothed_sqrt,eq:link_f_sigma} we have in both cases:
  \begin{align} \label{eq:bound_1_q_psi_delta}
    \tfrac{1}{q} \normin{\Psi \Delta}_{2, \infty} \leq (3 + \eta) \lambda \sigma^* \enspace .
  \end{align}
  Finally we exhibit an element of $\partial f(\Epsilon)$ to apply \Cref{lem:bound_delta_psidelta}.
  Recall that  $f = \frac{1}{\sqrt{nq}} \normin{\cdot}_F$ for \Cref{pb:multitask_sqrt},
  and $f = \norm{\cdot}_F \infconv \big(\tfrac{1}{2 \sigmamin} \normin{\cdot}_F^2 + \tfrac{\sigmamin}{2} \big) (\frac{\cdot}{\sqrt{nq}})$ for \Cref{pb:multitask_smoothed_sqrt}.
  On $\cA_1$, $\partial f (\Epsilon)$ is a singleton for both estimators, whose element is  $\Epsilon / (\normin{\Epsilon}_F \sqrt{nq})$.

  Additionally, on $\cA_1$ the inequality $\frac{1}{\sqrt{nq}} \frac{\normin{X^\top \Epsilon}_{2, \infty}}{ \normin{\Epsilon}_F} \leq \frac{\lambda}{2}$ holds,
  meaning we can apply \Cref{lem:bound_delta_psidelta} with $Z = \Epsilon / (\normin{\Epsilon}_F \sqrt{nq}) $.
  This proves the bound on $\normin{\Delta}_{2, \infty}$.
  Then, the support recovery property easily follows from \citet[Cor. 4.1]{Lounici_Pontil_Tsybakov_vandeGeer09}.
\end{proof}
\paragraph{Single task case} For the purpose of generality, we proved convergence results for the multitask versions of the square-root/concomitant Lasso and its smoothed version, but the results are also new in the single task setting.
Refined bounds of \Cref{prop:sqrtmtl_bds_est} in the single-task case are in \Cref{app_sub:sqrt_lasso_single_task}.

%
\section{Multivariate square-root Lasso}
\label{sec:multivar_sqrt}
%
Here we show that the multivariate square-root Lasso\footnote{we keep the name of \cite{vandeGeer16}, although a better name in our opinion would be the (multitask) trace norm Lasso, but the name is used by \cite{Grave_Obozinski_Bach11} when the nuclear norm is used as a regularizer} and its smoothed version also reach the minimax rate.
Recall that the multivariate square-root Lasso is \Cref{pb:multivar_sqrt}.
For the numerical reasons mentioned above, as well as to get rid of the invertibility assumption of $\hat \Epsilon^\top \hat \Epsilon$, we consider the smoothed estimator of \citet{Massias_Fercoq_Gramfort_Salmon17}:
\begin{problem}\label{pb:smoothed_multivar_sqrt}
    \argmin_{\substack{\Beta \in \bbR^{p \times q} \\
            \sigmamax \Id_n \succeq \Snoise \succeq \sigmamin \Id_n }}
    \frac{1}{2nq} \normin{Y - X \Beta}_{\Snoise^{-1}}^2
    + \frac{\Tr S}{2n}
    + \lambda \norm{\Beta}_{2,1}
    \enspace .
\end{problem}
The variable introduced by concomitant formulation is now a matrix $S$, corresponding to the square root of the noise covariance estimate.
The multivariate square-root Lasso~\eqref{pb:multivar_sqrt} and its concomitant formulation~\eqref{pb:conco_multivar} have the same solution in $\Beta$ provided $\hat \Epsilon^\top \hat \Epsilon$ is invertible.
In this case, the solution of \Cref{pb:conco_multivar} in $S$ is $\hat S = ( \tfrac 1q \hat \Epsilon \hat \Epsilon^\top)^{\frac{1}{2}}$.

\Cref{pb:smoothed_multivar_sqrt} is actually a small modification of \cite{Massias_Fercoq_Gramfort_Salmon17}, where we have added the second constraint $S \preceq \sigmamax \Id_n$.
$\sigmamax$ can for example be set as $\normin{(\tfrac 1q Y Y^\top)^{1/2}}_2$, as \Cref{fig:sing_val_residuals} illustrates that this is the order of magnitude of $\normin{\hat S}_2$.
Because of these constraints, the solution in $S$ is different from that of \Cref{pb:conco_multivar}.
We write a singular value decomposition of $\frac{1}{\sqrt{q}} \hat \Epsilon$: $UDV^\top$, with
$D = \diag(\gamma_i) \in \bbR^{n\times n}$, $U \in \bbR^{n \times n}$ and $V \in \bbR^{q \times n}$ such that $U^\top U = V^\top V = \Id_n$.
Then the solution in $S$ to \Cref{pb:smoothed_multivar_sqrt} is $\hat \Snoise = U \diag \big ( [\gamma_i]_{\sigmamin}^{\sigmamax} \big ) U^\top$ (this result is easy to derive from \citet[Prop. 2]{Massias_Fercoq_Gramfort_Salmon17}).
$\hat S$ can be used to bound $\normin{X^\top \hat \Epsilon}_{2, \infty}$:
%
\begin{lemma}\label{lem:bounds_res} (Proof in \Cref{lem:bound:residuals_sgcl})
  For the concomitant multivariate square-root Lasso~\eqref{pb:conco_multivar} and the smoothed concomitant multivariate square-root~\eqref{pb:smoothed_multivar_sqrt} we have:
  \begin{align*}
    \normin{X^\top \hat \Epsilon}_{2, \infty}
    &\leq \normin{\hat S }_2 \normin{ X^\top \hat S^{-1}\hat \Epsilon}_{2, \infty} \numberthis \label{eq:bound:XEpsilon_conco_multivar}\enspace.
  \end{align*}
\end{lemma}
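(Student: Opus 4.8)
The plan is to bound $\normin{X^\top \hat \Epsilon}_{2, \infty}$ by inserting the identity $\hat \Epsilon = \hat S \hat S^{-1} \hat \Epsilon$ (valid since $\hat S \succeq \sigmamin \Id_n \succ 0$ is invertible) and then factoring out the operator norm of $\hat S$. Concretely, for each row index $j \in [p]$ I would write the $j$-th row of $X^\top \hat \Epsilon$ as
\begin{equation*}
  (X^\top \hat \Epsilon)_{j:} = (X_{:j})^\top \hat \Epsilon = (X_{:j})^\top \hat S \, \hat S^{-1} \hat \Epsilon \enspace,
\end{equation*}
so that the $\ell_2$ norm of this row is the Euclidean norm of a vector obtained by applying the linear map $\hat S^{-1} \hat \Epsilon$ (acting on the left) to $\hat S X_{:j}$. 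The key observation is that $\hat S$ is symmetric, so $(X_{:j})^\top \hat S = (\hat S X_{:j})^\top$, and the relevant quantity is $\normin{(\hat S X_{:j})^\top \hat S^{-1} \hat \Epsilon}_2$.

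The main step is a submultiplicativity argument tying the $\ell_2$ norm of a row to the spectral norm of $\hat S$. I would argue that for any fixed column $X_{:j}$,
\begin{equation*}
  \normin{(X_{:j})^\top \hat \Epsilon}_2
  = \normin{(X_{:j})^\top \hat S \, \hat S^{-1} \hat \Epsilon}_2
  \leq \normin{\hat S}_2 \, \normin{(X_{:j})^\top \hat S^{-1} \hat \Epsilon}_2 \enspace,
\end{equation*}
where the inequality uses that $\hat S$ is symmetric positive definite, so applying it to the left of the residual contracts norms by at most its largest eigenvalue $\normin{\hat S}_2$. A clean way to see this is to note $(X_{:j})^\top \hat S = (\hat S X_{:j})^\top$ and bound $\normin{\hat S X_{:j}}_2 \leq \normin{\hat S}_2 \normin{X_{:j}}_2$ only after isolating the action on $X_{:j}$; more directly, one writes the row as a matrix product and applies the standard inequality $\normin{A B}_{\text{row}} \leq \normin{A}_2 \normin{B}_{\text{row}}$ for the operator/spectral norm composed with the per-row $\ell_2$ norm. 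Taking the maximum over $j \in [p]$ on both sides then converts the per-row bounds into the $\ell_{2,\infty}$ norm, yielding exactly \eqref{eq:bound:XEpsilon_conco_multivar}.

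The delicate point — and the step I expect to require the most care — is justifying the factorization of the spectral norm past the left multiplication cleanly, since $\hat S$ acts on the $n$-dimensional (sample) side while the $\ell_{2,\infty}$ norm aggregates over the $p$ feature rows. I would handle this by being explicit that $\normin{\cdot}_{2,\infty}$ is the maximum over feature rows of the Euclidean norm of the corresponding $q$-dimensional row vector, and that $\hat S$ multiplies on the left of the $n \times q$ residual factor $\hat S^{-1}\hat \Epsilon$ before contraction against $X_{:j}$. Since $\hat S$ is symmetric and its spectral norm equals its largest eigenvalue, the inequality $\normin{\hat S v}_2 \leq \normin{\hat S}_2 \normin{v}_2$ applies to the contraction vector, and this holds uniformly in $j$, so the maximum is preserved. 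Both estimators \eqref{pb:conco_multivar} and \eqref{pb:smoothed_multivar_sqrt} share the same structural form — the only difference being whether the singular values of $\tfrac{1}{\sqrt q}\hat\Epsilon$ are left unchanged or clipped to $[\sigmamin, \sigmamax]$ — so the identical argument covers both cases, with invertibility of $\hat S$ guaranteed in the smoothed case by the constraint $\hat S \succeq \sigmamin \Id_n$ and assumed in the concomitant case.
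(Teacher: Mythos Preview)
There is a genuine gap in the submultiplicativity step. Your key inequality
\[
\normin{(X_{:j})^\top \hat S \,(\hat S^{-1}\hat\Epsilon)}_2 \leq \normin{\hat S}_2\,\normin{(X_{:j})^\top \hat S^{-1}\hat\Epsilon}_2
\]
does not follow from $\hat S$ being symmetric positive definite. For a generic SPD matrix $S$ and generic $M$ it fails: with $v=(1,0)^\top$, $S=\left(\begin{smallmatrix}1 & 1/2\\ 1/2 & 1\end{smallmatrix}\right)$ and $M=(0,1)^\top$ one gets $\normin{v^\top S M}_2=1/2$ while $\normin{S}_2\normin{v^\top M}_2=0$. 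Likewise the ``standard inequality'' $\normin{AB}_{2,\infty}\le\normin{A}_2\normin{B}_{2,\infty}$ you invoke is false (take $A=\left(\begin{smallmatrix}1&1\\0&0\end{smallmatrix}\right)$, $B=\left(\begin{smallmatrix}1&0\\1&0\end{smallmatrix}\right)$: the left side is $2$, the right side $\sqrt 2$); only $\normin{AB}_{2,\infty}\le\normin{A}_{2,\infty}\normin{B}_2$ holds in general, and it needs the spectral factor on the \emph{right}. In your decomposition $X^\top\hat\Epsilon=(X^\top)\,\hat S\,(\hat S^{-1}\hat\Epsilon)$ the matrix $\hat S$ is sandwiched in the middle, so neither version applies directly.

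The inequality is nonetheless true here, but only because $\hat S$ and $\hat\Epsilon$ share their left singular vectors; this structure must be used. The paper makes it explicit via the SVD $\tfrac{1}{\sqrt q}\hat\Epsilon=UDV^\top$ and factors on the \emph{right}: one checks $X^\top\hat\Epsilon=(X^\top\hat S^{-1}\hat\Epsilon)\,VDV^\top$ in the concomitant case, and $X^\top\hat\Epsilon=(X^\top\hat S^{-1}\hat\Epsilon)\,V\diag([\gamma_i]_{\sigmamin}^{\sigmamax})V^\top$ in the smoothed case. Now the extra factor multiplies the $p\times q$ matrix on the right, the legitimate bound $\normin{AB}_{2,\infty}\le\normin{A}_{2,\infty}\normin{B}_2$ applies, and $\normin{VDV^\top}_2=\normin{\hat S}_2$ (respectively $\normin{V\diag([\gamma_i]_{\sigmamin}^{\sigmamax})V^\top}_2=\normin{\hat S}_2$) finishes the argument. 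Your plan can be repaired along these lines, but as written the justification of the crucial step is incorrect.
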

We can prove the minimax sup-norm convergence of these two estimators, using the following assumptions.
\begin{assumption}\label{assum:bound_residuals}
  For the multivariate square-root Lasso, $\hat \Epsilon^\top \hat \Epsilon$ is invertible, and there exists $\eta$ such that
  $\normin{( \tfrac{1}{q} \hat \Epsilon^\top \hat \Epsilon)^{\frac{1}{2}}}_2 \leq  (2 + \eta)\sigma^*$.
\end{assumption}
We get rid of this very strong hypothesis for the smoothed version, as the estimated noise covariance is invertible because of the constraint $S \succeq \sigmamin \Id_n$, and we can control its operator norm via the constraint $S \preceq \sigmamax \Id_n$. We still need an assumption on $\sigmamin$ and $\sigmamax$.
\begin{assumption}\label{assum:smoothing_param}
    $\sigmamin$, $\sigmamax$ and $\eta$ verify: $\sigmamin \leq \tfrac{\sigma^*}{\sqrt{2}}$ and $\sigmamax = (2 + \eta) \sigma^*$ with $\eta \geq 1$.
\end{assumption}
%
\begin{proposition}\label{prop:multivariate_sqrt_bds_est}
  For the multivariate square-root Lasso \eqref{pb:multivar_sqrt} (\resp its smoothed version \eqref{pb:smoothed_multivar_sqrt}), let \Cref{assum:gauss_noise} be satisfied, let $\alpha$ satisfy \Cref{assum:mut_inco} and let $\eta$ satisfy \Cref{assum:bound_residuals} (\resp let $\sigmamin, \sigmamax, \eta$ satisfy \Cref{assum:smoothing_param}).
  Let $C = \big(1 + \tfrac{16}{7(\alpha - 1)}\big)$, $A \geq \sqrt{2}$,
  and $\lambda = \frac{2\sqrt{2}}{ \sqrt{nq}} (1 + A \sqrt{(\log p) / q})$.
  Then there exists $c \geq 1/64$ such that with probability at least $1 - p^{1 - A^2/2} - 2 n e^{-cq/n}$,
  \begin{equation}
      \tfrac{1}{q} \normin{\hat \Beta - \Beta^*}_{2, \infty}
      \leq C (3 + \eta) \lambda \sigma^*
       \enspace .
  \end{equation}
  Moreover if
  \begin{equation}
    \min_{j \in \cS^*} \tfrac{1}{q} \normin{\Beta_{j:}^*}_{2}
    >
    2 C (3 + \eta) \lambda \sigma^* \enspace,
  \end{equation}
  then with the same probability:
  \begin{equation}
    \hat \cS \eqdef \{
      j \in [p] :
      \tfrac{1}{q} \normin{\hat \Beta_{j:}}_{2}
        > C (3 + \eta) \lambda \sigma^*
    \} \enspace
  \end{equation}
  correctly estimates the true sparsity pattern:
    $\hat \cS = \cS^*$.
\end{proposition}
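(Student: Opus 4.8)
The plan is to mirror the proof of \Cref{prop:sqrtmtl_bds_est}, adapting it to the multivariate setting. The overall architecture is identical: I would first place myself on a high-probability event $\cA$ on which the relevant concentration holds, then bound $\tfrac{1}{q}\normin{\Psi\Delta}_{2,\infty}$ by $(3+\eta)\lambda\sigma^*$, and finally invoke \Cref{lem:bound_delta_psidelta} to transfer this bound to $\normin{\Delta}_{2,\infty}$, with support recovery following from \citet[Cor. 4.1]{Lounici_Pontil_Tsybakov_vandeGeer09} under the large-coefficient assumption. The key difference lies in the datafitting term $f$, which now involves the nuclear norm (or its smoothing), so the subgradient $\hat Z \in \partial f(\hat\Epsilon)$ and the concentration event must be handled differently.

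\textbf{Bounding $\normin{\Psi\Delta}_{2,\infty}$.} Exactly as in \eqref{eq:bound_n_psi_delta}, I would write
\begin{equation}
  n\normin{\Psi\Delta}_{2,\infty}
  \leq \normin{X^\top\hat\Epsilon}_{2,\infty} + \normin{X^\top\Epsilon}_{2,\infty} \enspace, \nonumber
\end{equation}
and control each term. For $\normin{X^\top\hat\Epsilon}_{2,\infty}$, I would use \Cref{lem:bounds_res}: the factorization $\normin{X^\top\hat\Epsilon}_{2,\infty} \leq \normin{\hat S}_2 \normin{X^\top\hat S^{-1}\hat\Epsilon}_{2,\infty}$ reduces the problem to bounding $\normin{\hat S}_2$ (the operator norm of the noise-covariance estimate) and the ``whitened'' quantity $\normin{X^\top\hat S^{-1}\hat\Epsilon}_{2,\infty}$, the latter being controlled by the first-order optimality conditions of \eqref{pb:conco_multivar}/\eqref{pb:smoothed_multivar_sqrt}. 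The operator norm $\normin{\hat S}_2$ is bounded by $(2+\eta)\sigma^*$ directly through \Cref{assum:bound_residuals} in the unsmoothed case, and through the explicit clipping $\hat S = U\diag([\gamma_i]_{\sigmamin}^{\sigmamax})U^\top$ together with $\sigmamax=(2+\eta)\sigma^*$ from \Cref{assum:smoothing_param} in the smoothed case. This is precisely where the constraint $S\preceq\sigmamax\Id_n$ earns its keep: it replaces the brittle full-rank hypothesis by an enforceable operator-norm bound.

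\textbf{The concentration step and the main obstacle.} For $\normin{X^\top\Epsilon}_{2,\infty}$ I would need a probabilistic bound analogous to the event $\cA_1$, but now matched to the nuclear-norm geometry. I expect the subgradient of the (smoothed) nuclear norm at $\Epsilon$ to be of the form $U_\Epsilon V_\Epsilon^\top$ (from the SVD of $\Epsilon$), so the relevant quantity to concentrate is $\normin{X^\top U_\Epsilon V_\Epsilon^\top}_{2,\infty}$ rather than $\normin{X^\top\Epsilon}_{2,\infty}/\normin{\Epsilon}_F$; this is what produces the different deviation term $2ne^{-cq/n}$ with $c\geq 1/64$, and I would prove it via a concentration inequality on the spectrum/singular subspaces of the Gaussian matrix $\Epsilon$. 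This spectral concentration is the hard part: unlike the Frobenius case where $\partial f(\Epsilon)$ is the simple singleton $\Epsilon/(\normin{\Epsilon}_F\sqrt{nq})$, here I must control both the extreme singular values of $\Epsilon$ (to pin down $\normin{\hat S}_2$ and the invertibility of $\hat\Epsilon^\top\hat\Epsilon$) and the behavior of the singular vectors feeding into $X^\top Z$, which requires nonasymptotic random-matrix estimates rather than a scalar $\chi^2$ tail bound. Once the event is established and $\tfrac{1}{q}\normin{\Psi\Delta}_{2,\infty}\leq(3+\eta)\lambda\sigma^*$ is in hand, the conclusion follows verbatim from \Cref{lem:bound_delta_psidelta} and the support-recovery corollary, identically to \Cref{prop:sqrtmtl_bds_est}.
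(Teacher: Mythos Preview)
Your overall architecture is correct and matches the paper's proof: define a high-probability event, bound $\tfrac{1}{q}\normin{\Psi\Delta}_{2,\infty}$ via \Cref{lem:bounds_res} plus optimality conditions plus the operator-norm bound $\normin{\hat S}_2\le(2+\eta)\sigma^*$ (from \Cref{assum:bound_residuals} or \Cref{assum:smoothing_param}), then apply \Cref{lem:bound_delta_psidelta} and the support-recovery corollary. That part is fine.

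Where you go astray is the concentration step. You frame the ``main obstacle'' as having to concentrate $\normin{X^\top U_\Epsilon V_\Epsilon^\top}_{2,\infty}$ directly, i.e.\ controlling the singular \emph{vectors} of $\Epsilon$ via nonasymptotic random-matrix estimates. The paper avoids this entirely. Its event $\cA_2$ is simply
\[
\Big\{ \tfrac{1}{nq}\normin{X^\top\Epsilon}_{2,\infty}\le \tfrac{\lambda\sigma^*}{2\sqrt 2}\Big\}
\cap
\Big\{\tfrac{\sigma^*}{\sqrt 2}\Id \prec \big(\tfrac{1}{q}\Epsilon\Epsilon^\top\big)^{1/2}\prec 2\sigma^*\Id\Big\},
\]
i.e.\ the \emph{same} $\normin{X^\top\Epsilon}_{2,\infty}$ bound as in the Frobenius case, together with two-sided bounds on the extreme singular values of $\Epsilon$ (Gittens--Tropp type Wishart tails, which produce the $ne^{-cq/n}$ term). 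The link you are missing is that the factorization of \Cref{lem:bounds_res} works just as well applied to the true noise $\Epsilon$: writing $\Epsilon=UDV^\top$ one has $(\Epsilon\Epsilon^\top)^{-1/2}\Epsilon = UV^\top$ and $X^\top UV^\top = X^\top\Epsilon\cdot V D^{-1}V^\top$, so
\[
\tfrac{1}{n\sqrt q}\normin{X^\top(\Epsilon\Epsilon^\top)^{-1/2}\Epsilon}_{2,\infty}
\le \tfrac{1}{nq}\normin{X^\top\Epsilon}_{2,\infty}\cdot\normin{(\tfrac{1}{q}\Epsilon\Epsilon^\top)^{-1/2}}_2
\le \tfrac{\lambda\sigma^*}{2\sqrt 2}\cdot\tfrac{\sqrt 2}{\sigma^*}=\tfrac{\lambda}{2}.
\]
This is exactly how the paper verifies $\normin{X^\top Z}_{2,\infty}\le\lambda/2$ for $Z=(\Epsilon\Epsilon^\top)^{-1/2}\Epsilon/(n\sqrt q)\in\partial f(\Epsilon)$, and it requires only singular-\emph{value} control, never singular-vector concentration. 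Once you see this reuse of \Cref{lem:bounds_res}, the ``hard part'' you anticipate disappears.
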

\begin{proof}
  Let $\cA_2$ be the event:
  \begin{equation}
    \Big\{ \tfrac{\normin{X^\top \Epsilon  }_{2, \infty}}{nq} \leq \tfrac{\lambda \sigma^*}{2 \sqrt{2}} \Big\}
  \cap \{ 2 \sigma^* \Id_q
    \succ (\tfrac{\Epsilon^\top \Epsilon}{n})^{\frac{1}{2}}
    \succ \tfrac{\sigma^*}{\sqrt{2}} \Id_q \}
    \enspace.
  \end{equation}
  By \Cref{lem:control_A2}, $\bbP (\cA_2) \geq 1 - p^{1 - A^2/2} - 2 n e^{-cq/n}$ ($c\leq 1/64)$.
  When the multivariate square-root Lasso residuals are full rank, the optimality conditions for \Cref{pb:multivar_sqrt,pb:smoothed_multivar_sqrt} read the same, but with differents $\hat S$ (introduced above):
  \begin{equation}\label{eq:kkt_multivar_sqrt}
    \normin{X^\top \hat S^{- 1} \hat \Epsilon}_{2, \infty}
    \leq
    \lambda q n
    \enspace .
\end{equation}
  With \Cref{lem:bounds_res,eq:kkt_multivar_sqrt} and \Cref{assum:bound_residuals}
  for the multivariate square-root Lasso (or \Cref{assum:smoothing_param} for its smoothed version):
  \begin{align}
    n \normin{\Psi & \Delta}_{2, \infty}
     = \normin{X^\top (\Epsilon - \hat \Epsilon)}_{2, \infty} \nonumber\\
    & \leq \normin{X^\top \hat \Epsilon}_{2, \infty} + \normin{X^\top \Epsilon}_{2, \infty}  \nonumber\\
    &\leq \lambda q n \normin{\hat S}_2
    + \normin{X^\top \Epsilon}_{2, \infty}  \nonumber\\
    & \leq \lambda (2 + \eta) q n \sigma^*
    + \normin{X^\top \Epsilon}_{2, \infty} \enspace .
  \end{align}
  Then on the event $\cA_2$:
  \begin{align}
    \frac{1}{q} \normin{\Psi \Delta}_{2, \infty}
    & \leq
    \lambda (2 + \eta) \sigma^*
    + \tfrac{1}{nq} \normin{X^\top \Epsilon}_{2, \infty} \nonumber \\
    & \leq
    \left (3 + \eta \right ) \lambda \sigma^* \enspace.
  \end{align}

Finally we exhibit an element of $\partial f(\Epsilon)$ to apply \Cref{lem:bound_delta_psidelta}.
Recall that  $f = \frac{1}{n\sqrt{q}} \normin{\cdot}_*$ for \Cref{pb:conco_multivar},
and $\displaystyle f = \min_{\substack{ \sigmamax \Id_n \succeq \Snoise \succeq \sigmamin \Id_n }}
\tfrac{1}{2nq} \normin{\cdot}_{\Snoise^{-1}}^2
+ \tfrac{\Tr S}{2n} $ for \Cref{pb:smoothed_multivar_sqrt}.
We also recall that for a full rank matrix $A \in \bbR^{n \times q}$ \citep[Sec. 2]{Koltchinskii_Lounici_Tsybakov11}:
\begin{align}
  \partial \normin{A}_* = \{
    (A A^\top)^{-1/2} A
    \} \enspace .
\end{align}
On $\cA_2$, $\partial f (\Epsilon)$ is a singleton for both estimators, whose element is  $(\Epsilon \Epsilon^\top)^{-1/2} \Epsilon / (n\sqrt{q}) $.
Additionally on $\cA_2$, using the same proof as in \Cref{lem:bound:residuals_sgcl}:
\begin{align}
  \tfrac{1}{n\sqrt{q}} \normin{X^\top (\Epsilon \Epsilon^\top)^{- 1 / 2} \Epsilon }_{2, \infty}
  & \leq \tfrac{1}{nq} \normin{X^\top \Epsilon}_{2, \infty}
        \normin{(\tfrac{\Epsilon \Epsilon^\top}{q})^{- 1 / 2}}_2 \nonumber \\
  & \leq \frac{\lambda \sigma^* }{2 \sqrt{2}} \times \frac{\sqrt{2}}{\sigma^*}
  \leq \frac{\lambda}{2} \enspace ,
\end{align}
meaning we can apply \Cref{lem:bound_delta_psidelta} with $Z= \Epsilon (\Epsilon^\top \Epsilon)^{-1/2}/ n\sqrt{q}$.
  This proves the bound on $\normin{\Delta}_{2, \infty}$.
  Then, the support recovery property easily follows from \citet[Cor. 4.1]{Lounici_Pontil_Tsybakov_vandeGeer09}.
\end{proof}



\section{Experiments}
\label{sec:experiments}
%
We first describe the setting of \Cref{fig:lambda_opt_noise,fig:sing_val_residuals}.
Then we show that empirically that results given by \Cref{prop:sqrtmtl_bds_est,prop:multivariate_sqrt_bds_est} hold in practice.
The signal-to-noise ratio (SNR) is defined as $\frac{\normin{X \Beta^*}_F}{\normin{Y - X \Beta}_F}$.

\subsection{Pivotality of the square-root Lasso}
\label{sub:lambda_opt_noise}
In this experiment the matrix $X$ consists of the $\num{10000}$ first columns of the \emph{climate} dataset ($n=864$).
We generate $\beta^*$ with $20$ non-zero entries.
Random Gaussian noise is added to $X \beta^*$ to create $y$, with a noise variance $\sigma^*$ controlling the SNR.

For each SNR value, both for the Lasso and the square-root Lasso, we compute the optimal $\lambda$ on a grid between $\lambda_{\max}$ (the estimator specific smallest regularization level yielding a 0 solution), using cross validation on prediction error on left out data.
For each SNR, results are averaged over 10 realizations of $y$.

\Cref{fig:lambda_opt_noise} shows that, in accordance with theory, the optimal $\lambda$ for the Lasso depends linearly on the noise level, while the square-root Lasso achieves pivotality.

\subsection{Rank deficiency experiment}
\label{sub:expe_rank_deficency}
%
For $(n, q, p) = (10, 20, 30)$, we simulate data: entries of $X$ are \iid $\cN(0, 1)$, $\Beta^*$ has $5$ non zeros rows, and Gaussian noise is to $X \Beta^*$ added to result in a SNR of 1.
We reformulate \Cref{pb:multivar_sqrt} as a Conic Program, and solve it with the SCS solver of \texttt{cvxpy} \citep{Odonoghue_Chu_Parikh_Boyd16,cvxpy} for various values of $\lambda$ ($\lambda_{\max}$ is the smallest regularization value yielding a null solution).
We then plot the singulars values of the residuals at optimum, shown on \Cref{fig:sing_val_residuals}.

Since the problem is reformulated as a Conic Program and solved approximately (precision $\epsilon=10^{-6}$), the residuals are not exact; however the sudden drop of singular values of $Y - X \Betahat$ must be interpreted as the singular value being exactly 0.
One can see that even for very high values of $\lambda$, the residuals are rank deficient while the matrix $Y$ is not.
This is most likely due to the trace penalty on $S$ in the equivalent formulation of \Cref{pb:conco_multivar}, encouraging singular values to be 0.
Therefore, even on simple toy data, the hypothesis used by \citet{vandeGeer_Stucky16,Molstad19} does not hold, justifying the need for smoothing approaches, both from practical and theoretical point of views.

\subsection{(Multitask) smoothed concomitant Lasso}
\label{sub:expes_scl}
%
%
\def \figsize {1}
\begin{figure}[t]
    \centering
    \begin{minipage}{0.8\linewidth}
        \includegraphics[width=\figsize\linewidth]{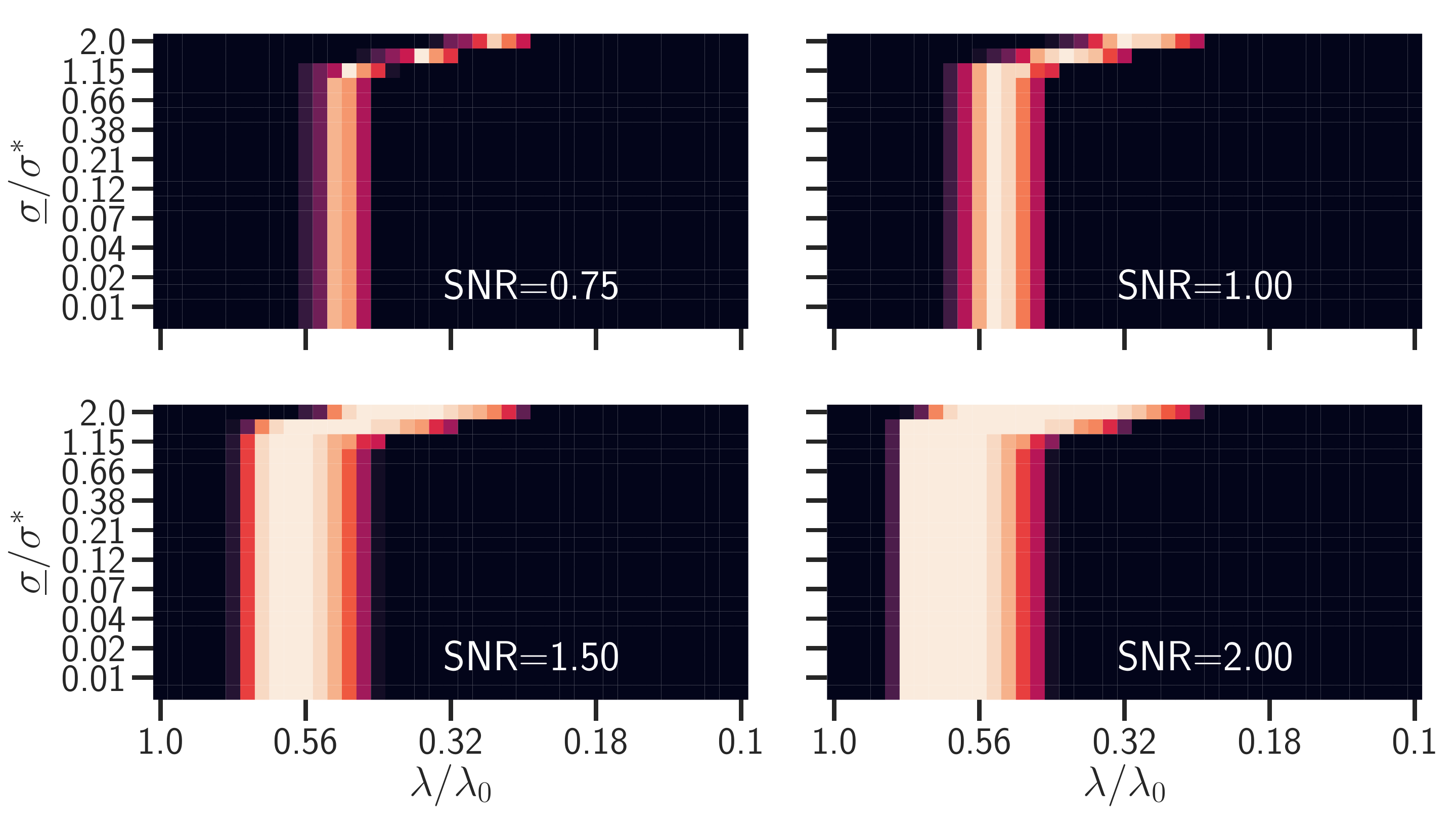}
    \end{minipage}
    \begin{minipage}{0.1\linewidth}
        \includegraphics[width=\figsize\linewidth]{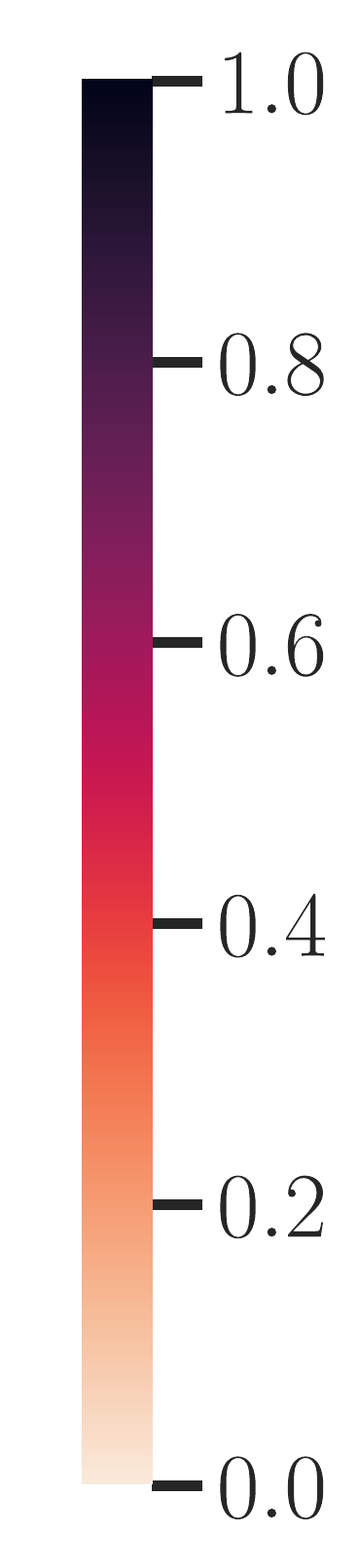}
    \end{minipage}
    \caption{(Synthetic data, $n=50$, $p=1000$, $q=20$) Hard recovery loss for different values of SNR for the multitask SCL.
    }
    \label{fig:hard_recov_SCL}
\end{figure}



\begin{figure}[t]
    \centering
    \begin{minipage}{0.8\linewidth}
        \includegraphics[width=\figsize\linewidth]{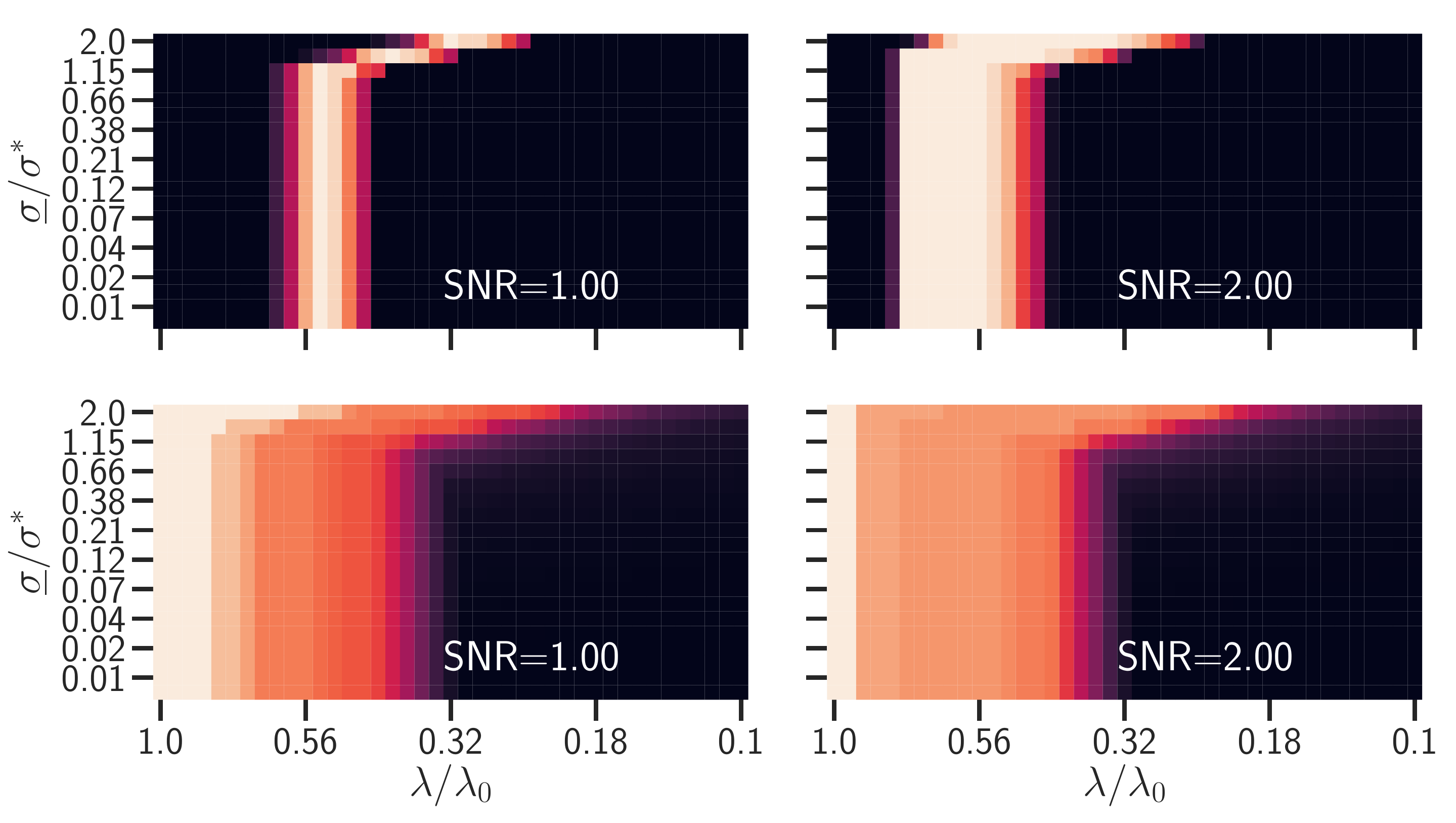}
    \end{minipage}
    \begin{minipage}{0.1\linewidth}
        \includegraphics[width=\figsize\linewidth]{colorbar_vert_hard_recov_SCL_rho_X_0_5}
    \end{minipage}
    \caption{(Synthetic data, $n=50$, $p=1000$, $q=20$) Hard recovery loss (top) and percent of non-zeros coefficients (bottom) for different values of SNR: $\SNR=1$ (left), $\SNR=2$ (right) for the multitask SCL.}
        \label{fig:hard_recov_size_supportSCL}
\end{figure}

Here we illustrate, as indicated by theory, that when the smoothing parameter $\sigmamin$ is sufficiently small, the multitask SCL is able to recover the true support (\Cref{prop:sqrtmtl_bds_est}).
More precisely, when $\sigmamin \leq \sigma^* / \sqrt{2}$, there exist a $\lambda$, independent of $\sigmamin$ and $\sigma^*$, such that the multitask SCL recovers the true support with high probability.
We use $(n, q, p) = (50, 50, 1000)$.
The design $X$ is random with Toeplitz-correlated features with parameter $\rho_X = 0.5$ (correlation between $X_{:i}$ and $X_{:j}$ is $\rho_X^{|i - j|}$), and its columns have unit Euclidean norm.
The true coefficient $\Beta^*$ has $5$ non-zeros rows whose entries are \iid $\cN(0, 1)$.
%
\paragraph{Comments on \Cref{fig:hard_recov_SCL,fig:hard_recov_size_supportSCL}}
The multitask SCL relies on two hyperparameters: the penalization coefficient $\lambda$ and the smoothing parameter $\sigmamin$, whose influence we study here.
The goal is to show empirically that when $\sigmamin \leq \sigma^* / \sqrt{2}$ the optimal $\lambda$ does not depend on the smoothing parameter $\sigmamin$.
We vary $\lambda$ and $\sigmamin$ on a grid: for each pair $(\lambda, \sigmamin)$ we solve the multitask SCL.
For each solution $\hat \Beta^{(\lambda, \sigmamin)}$ we then compute a metric, the hard recovery (\Cref{fig:hard_recov_SCL}) or the size of the support (\Cref{fig:hard_recov_size_supportSCL}).
The metrics are averaged over $100$ realizations of the noise.
\Cref{fig:hard_recov_SCL} shows the latter graph for different values of SNR.
We can see that when $\sigmamin \leq \sigma^*$, support recovery is achieved for $\lambda$ independent of $\sigmamin$.
As soon as $\sigmamin > \sigma^*$ the optimal $\lambda$ depends on $\sigmamin$.
When $\sigmamin$ reaches a large enough value (\ie $\sigma^*$) then the recovery profile is modified: the optimal $\lambda$ decreases as $\sigmamin$ grows.
This is logical, since as soon as the constraint is saturated, the (multitask) SCL boils down to a multitask Lasso with regularization parameter $\lambda \sigmamin$:
\begin{problem}\label{pb:saturated_conco}
    \Betahat
    \eqdef
    \argmin_{\Beta \in \bbR^{p\times q}}
    \frac{1}{2nq} \norm{Y - X \Beta}_F^2
    + \lambda \sigmamin \norm{\Beta}_{2,1}
    \enspace .
\end{problem}
\Cref{fig:hard_recov_size_supportSCL} shows that with a fixed $\lambda$ higher values of $\sigmamin$ may lead to smaller support size, see \eg $\lambda / \lambda_0 = 0.32$.

\subsection{Smoothed generalized concomitant Lasso (SGCL)}
\label{sub:expes_sgcl}
%
The experimental setting is the same as before, except here we used $(n, q, p) = (150, 100, 500)$.
\Cref{fig:hard_recov_SGCL} illustrates \Cref{prop:multivariate_sqrt_bds_est}.
When $\sigmamin \leq \sigma^*$, there exist a $\lambda$ that does not depend on $\sigmamin$ and such that SGCL finds the true support $\cS^*$.
However, as before, when $\sigmamin \geq \sqrt{2} \sigma^*$, $\lambda$ depends on $\sigmamin$.

%
%
\def \figsizetwo {0.8}
%
%
%
\begin{figure}[t]
    \centering
    \begin{minipage}{0.8\linewidth}
        \includegraphics[width=\figsize\linewidth]{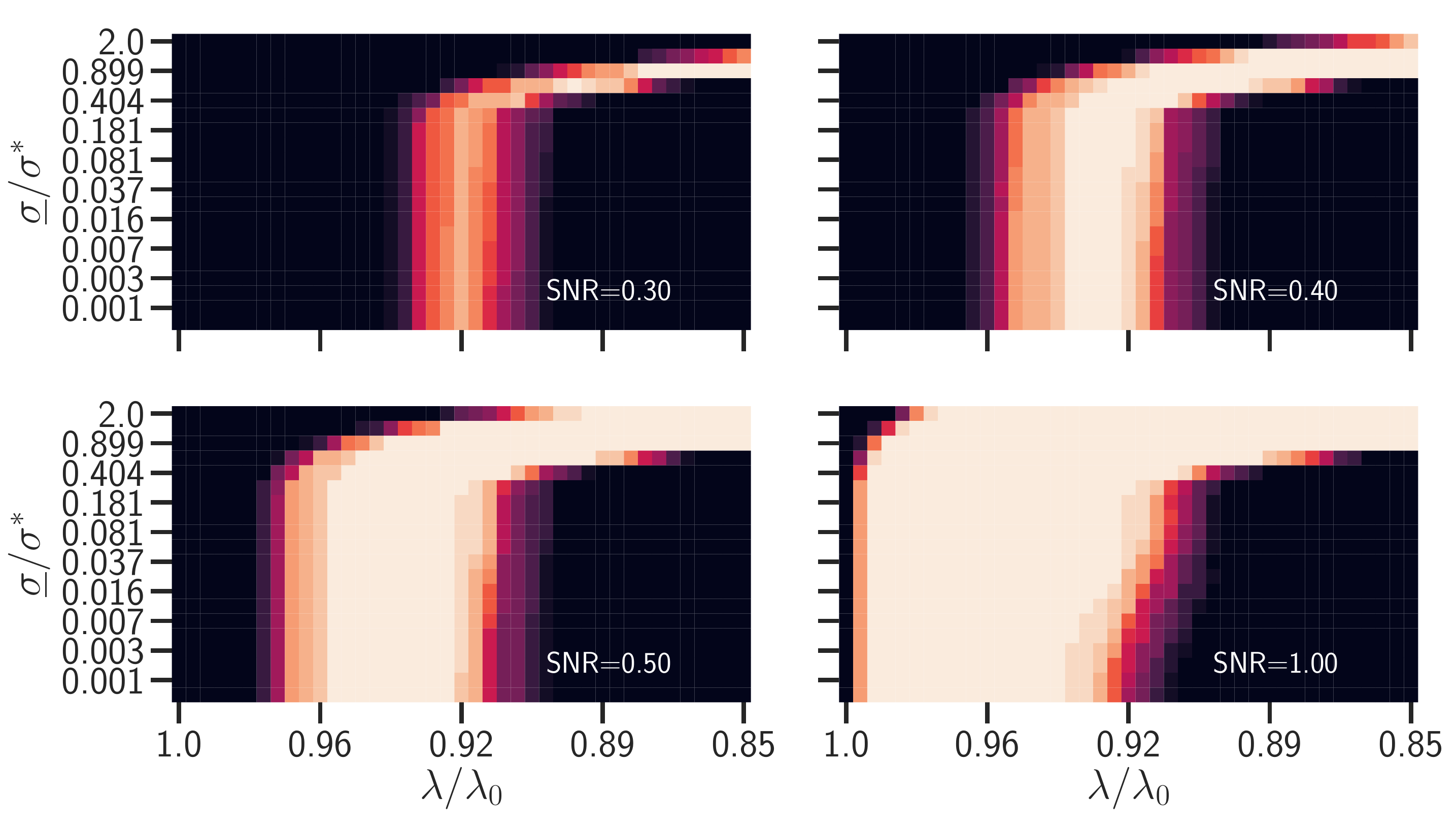}
    \end{minipage}
    \begin{minipage}{0.1\linewidth}
        \includegraphics[width=\figsize\linewidth]{colorbar_vert_hard_recov_SCL_rho_X_0_5}
    \end{minipage}
    \caption{(Synthetic data, $n=150$, $p=500$, $q=100$) Hard recovery loss for different
    values of SNR for the SGCL.
    }
    \label{fig:hard_recov_SGCL}
\end{figure}

\paragraph{Conclusion}
We have proved sup norm convergence rates and support recovery for a family of sparse estimators derived from the square-root Lasso.
We showed that they are pivotal too: the optimal regularization parameter does not depend on the noise level.
We showed that their smoothed versions retain these properties while being simpler to solve, and requiring more realistic assumptions to be analyzed.
These findings were corroborated numerically, in particular for the influence of the smoothing parameter.

\paragraph{Acknowledgments}
This work was funded by ERC Starting Grant SLAB ERC-StG-676943.
We would like to thank Karim Lounici for numerous discussions, suggestions and pointers.

\clearpage
\bibliographystyle{plainnat}
\bibliography{references_all}

\appendix

\onecolumn
\section{Technical lemmas}
\label{app_sec:technical_lem}

\begin{lemma}
    Let $\Psi$, $\alpha$ and $s$ satisfy \Cref{assum:mut_inco}, let $\hat \Beta$ be an estimator satisfying:
    $\norm{\Delta_{\cS_*^c}}_{2, 1}
         \leq  3 \norm{\Delta_{\cS_*}}_{2, 1}$,
    then:
  \begin{lemmaenum}[topsep=4pt,itemsep=4pt,partopsep=4pt,parsep=4pt]
    \item $\norm{\Delta_{\cS_*}}_F
          \leq
          \frac{\alpha }{ \alpha - 1} 4 \sqrt{s} \norm{\Psi \Delta}_{2, \infty}
          \enspace ,$ \label{lem:bnd_l2_l_infty}
    \item  $\norm{\Delta}_{2, 1}
            \leq
            \frac{\alpha}{ \alpha - 1} 16 s \norm{\Psi \Delta}_{2, \infty}
            \enspace ,$  \label{lem:bnd_l_1_l_infty}
    \item $
        \normin{\Delta}_{2, \infty}
        \leq
        \left(1 + \frac{16}{7(\alpha - 1) } \right) \norm{\Psi \Delta}_{2, \infty} \enspace.$ \label{lem:bnd_l_infty_l_infty}
  \end{lemmaenum}
\end{lemma}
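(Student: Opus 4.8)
The whole lemma reduces to combining two ingredients: an elementary ``diagonal dominance'' inequality coming from \Cref{assum:mut_inco} (the diagonal of $\Psi$ is $1$ and the off-diagonal is tiny), and the restricted eigenvalue property of \Cref{lem:rep}, which is exactly what bounds the energy of $\Delta$ on the support. The plan is to prove the three items in the order \Cref{lem:bnd_l2_l_infty} $\to$ \Cref{lem:bnd_l_1_l_infty} $\to$ \Cref{lem:bnd_l_infty_l_infty}, since each feeds the next. Throughout I will use the standing cone consequence: from $\norm{\Delta_{\cS_*^c}}_{2,1}\leq 3\norm{\Delta_{\cS_*}}_{2,1}$ and Cauchy--Schwarz with $|\cS_*|\leq s$,
\begin{equation*}
  \norm{\Delta}_{2,1} = \norm{\Delta_{\cS_*}}_{2,1}+\norm{\Delta_{\cS_*^c}}_{2,1}\leq 4\norm{\Delta_{\cS_*}}_{2,1}\leq 4\sqrt{s}\,\norm{\Delta_{\cS_*}}_F \enspace.
\end{equation*}

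\textbf{Item \Cref{lem:bnd_l2_l_infty} (the hard one).} I would start from the identity $\tfrac{1}{n}\norm{X\Delta}_F^2 = \Tr(\Delta^\top \Psi \Delta) = \langle \Delta, \Psi\Delta\rangle$. The cone condition lets me invoke the sparse conditioning bound of \Cref{lem:rep}, giving the lower bound $\tfrac1n\norm{X\Delta}_F^2 \geq (1-\tfrac1\alpha)\norm{\Delta_{\cS_*}}_F^2$. For the matching upper bound I apply H\"older over the rows, $\langle \Delta,\Psi\Delta\rangle \leq \norm{\Delta}_{2,1}\norm{\Psi\Delta}_{2,\infty}$, and then insert $\norm{\Delta}_{2,1}\leq 4\sqrt{s}\,\norm{\Delta_{\cS_*}}_F$. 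Chaining the two bounds yields $(1-\tfrac1\alpha)\norm{\Delta_{\cS_*}}_F^2 \leq 4\sqrt{s}\,\norm{\Delta_{\cS_*}}_F\norm{\Psi\Delta}_{2,\infty}$; dividing by $\norm{\Delta_{\cS_*}}_F$ (the case $\Delta_{\cS_*}=0$ being trivial) and using $\tfrac{1}{1-1/\alpha}=\tfrac{\alpha}{\alpha-1}$ produces exactly the claimed constant $\tfrac{\alpha}{\alpha-1}4\sqrt{s}$.

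\textbf{Items \Cref{lem:bnd_l_1_l_infty} and \Cref{lem:bnd_l_infty_l_infty}.} Item \Cref{lem:bnd_l_1_l_infty} is then immediate: plug \Cref{lem:bnd_l2_l_infty} into $\norm{\Delta}_{2,1}\leq 4\sqrt{s}\,\norm{\Delta_{\cS_*}}_F$, which multiplies the two factors of $4\sqrt{s}$ into $16s$ and keeps $\tfrac{\alpha}{\alpha-1}$. For item \Cref{lem:bnd_l_infty_l_infty} I would prove the diagonal-dominance inequality separately: writing $(\Psi\Delta)_{j:}=\Delta_{j:}+\sum_{j'\neq j}\Psi_{jj'}\Delta_{j':}$, the triangle inequality and $\max_{j'\neq j}|\Psi_{jj'}|\leq \tfrac{1}{7\alpha s}$ give, for every row $j$, $\norm{\Delta_{j:}}_2\leq \norm{\Psi\Delta}_{2,\infty}+\tfrac{1}{7\alpha s}\norm{\Delta}_{2,1}$, hence $\norm{\Delta}_{2,\infty}\leq \norm{\Psi\Delta}_{2,\infty}+\tfrac{1}{7\alpha s}\norm{\Delta}_{2,1}$. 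Substituting the bound from \Cref{lem:bnd_l_1_l_infty} turns $\tfrac{1}{7\alpha s}\cdot\tfrac{\alpha}{\alpha-1}16s$ into $\tfrac{16}{7(\alpha-1)}$, giving the stated factor $1+\tfrac{16}{7(\alpha-1)}$.

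\textbf{Main obstacle.} The only genuinely nontrivial step is \Cref{lem:bnd_l2_l_infty}: everything else is H\"older/Cauchy--Schwarz bookkeeping and careful constant tracking. The delicate point there is to route the energy $\tfrac1n\norm{X\Delta}_F^2$ through the restricted eigenvalue lower bound while upper-bounding the same quantity by the dual pair $(\norm{\cdot}_{2,1},\norm{\cdot}_{2,\infty})$, so that the cone condition can convert the global $\ell_{2,1}$ norm back into the on-support Frobenius norm and the $\norm{\Delta_{\cS_*}}_F$ factors cancel cleanly.
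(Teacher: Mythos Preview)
Your proposal is correct and mirrors the paper's proof essentially step for step: the same cone-to-Frobenius bound $\norm{\Delta}_{2,1}\leq 4\sqrt{s}\norm{\Delta_{\cS_*}}_F$, the same sandwich of $\tfrac{1}{n}\norm{X\Delta}_F^2$ between the restricted-eigenvalue lower bound and the H\"older upper bound for \Cref{lem:bnd_l2_l_infty}, the immediate substitution for \Cref{lem:bnd_l_1_l_infty}, and the diagonal-dominance row expansion $(\Psi\Delta)_{j:}=\Delta_{j:}+\sum_{j'\neq j}\Psi_{jj'}\Delta_{j':}$ combined with \Cref{lem:bnd_l_1_l_infty} for \Cref{lem:bnd_l_infty_l_infty}. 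There is no meaningful difference in method or in constant tracking.
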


\begin{proof}
  For {\Cref{lem:bnd_l2_l_infty}}, the idea is to upper and lower bound $\tfrac{1}{n} \norm{X \Delta}_F^2$.  
  First we bound $\norm{\Delta}_{2, 1}$:
  \begin{align}
      \norm{\Delta}_{2, 1} &= \norm{\Delta_{S_*^c}}_{2, 1} + \norm{\Delta_{S_*}}_{2, 1} \nonumber \\
      &\leq 4 \norm{\Delta_{S_*}}_{2, 1}\nonumber \\
      &\leq 4 \sqrt{s} \norm{\Delta_{S_*}}_F \enspace. \label{eq:link_l_1_l2}
  \end{align}
  Now we can upper bound $\tfrac{1}{n} \norm{X \Delta}_F^2$ with H\"older inequality and \Cref{eq:link_l_1_l2}:
  \begin{align}
      \tfrac{1}{n} \norm{X \Delta}_F^2
      & = \langle \Delta, \Psi \Delta \rangle \nonumber \\
      & \leq \norm{\Delta}_{2, 1} \norm{\Psi \Delta}_{2, \infty} \nonumber \\
      & \leq 4\sqrt{s}\norm{\Delta_{S_*}}_F \norm{\Psi \Delta}_{2, \infty} \enspace. \label{eq:link_fro_2_inf}
  \end{align}
  By \Cref{eq:sparse_conditionning} and \Cref{eq:link_fro_2_inf}:
  \begin{align}
      (1-\tfrac{1}{\alpha})\norm{\Delta_{S_*}}_F^2
      &\leq \tfrac{1}{n} \norm{X \Delta}_F^2
      \nonumber \\
      & \leq 4\sqrt{s}\norm{\Delta_{S_*}}_F \norm{\Psi \Delta}_{2, \infty}
      \nonumber\\
      \norm{\Delta_{S_*}}_F
      &\leq
      \frac{\alpha }{ \alpha - 1} 4 \sqrt{s} \norm{\Psi \Delta}_{2, \infty}
      \enspace .
  \end{align}
  \Cref{lem:bnd_l_1_l_infty} is a direct consequence of \Cref{eq:link_l_1_l2,lem:bnd_l2_l_infty}:
  \begin{align}
      \norm{\Delta}_{2, 1}
      &\leq 4 \sqrt{s} \norm{\Delta_{S_*}}_F \nonumber \\
      &\leq
      \frac{\alpha }{ \alpha - 1} 16 s \norm{\Psi \Delta}_{2, \infty}
      \enspace .
  \end{align}
  Finally, for \Cref{lem:bnd_l_infty_l_infty}, for any $j \in [p]$,
  \begin{align}
      \left(\Psi \Delta \right)_{j:}
      &=
      \Delta_{j:}
      + \textstyle{\sum_{j' \neq j}} \Psi_{j'j} \Delta_{j':}   \nonumber\\
      || \left(\Psi \Delta \right)_{j:} - \Delta_{j:} ||_2
      & \leq
      \textstyle{\sum_{j' \neq j}} |\Psi_{jj'}| \times ||\Delta_{j':}||_2
     \nonumber\\
      ||\left(\Psi \Delta \right)_{j:}
      - \Delta_{j:} ||_2
      &\leq
       \frac{1}{7 \alpha s} \textstyle{\sum_{j' \neq j}}
       \normin{\Delta_{j':}}  \nonumber\\
       \normin{\Delta}_{2, \infty}
      \leq &
       \normin{\Psi \Delta }_{2, \infty}
       + \tfrac{1}{7 \alpha s} \normin{\Delta}_{2, 1}   \nonumber \\
      \leq &
       \left(1 + \tfrac{16}{7(\alpha - 1) } \right) \norm{\Psi \Delta}_{2, \infty} \enspace .
   \end{align}
   using \Cref{assum:mut_inco,lem:bnd_l_1_l_infty}.
\end{proof}

\begin{lemma}\label{lem:link_hat_Epsilon_smooth_sqrt}
  Let \Cref{assum:high_noise} be true, on $\cA_1$ we have for \Cref{pb:multitask_smoothed_sqrt}:
  \begin{equation}
    \tfrac{1}{\sqrt{nq}} \normin{\hat \Epsilon}_F
    \leq
    ( 2 + \eta) \sigma^* \enspace .
  \end{equation}
\begin{proof}
For the \Cref{pb:multitask_smoothed_sqrt}, by the minimality of the estimator we have on $\cA_1$:
\begin{align*}
  \left (
    \norm{\cdot}_F \infconv \big(\tfrac{1}{2 \sigmamin} + \tfrac{\sigmamin}{2} \big) \normin{\cdot}_F^2
    \right )
    (\tfrac{\hat \Epsilon}{\sqrt{nq}} )
  + \lambda \normin{\hat \Beta}_{2, 1}
  &\leq
  \left (
    \norm{\cdot}_F \infconv \big(\tfrac{1}{2 \sigmamin} \normin{\cdot}_F^2 + \tfrac{\sigmamin}{2}) \right ) (\tfrac{ \Epsilon}{\sqrt{nq}})
  + \lambda \normin{\Beta^*}_{2, 1}
\end{align*}
\begin{align*}
  \tfrac{1}{\sqrt{nq}} \normin{\hat \Epsilon}_F
  + \lambda \normin{\hat \Beta}_{2, 1}
  &\leq
  \left (
    \normin{\cdot}_F \infconv \big(\tfrac{1}{2 \sigmamin} \normin{\cdot}_F^2 + \tfrac{\sigmamin}{2} ) (\tfrac{ \Epsilon}{\sqrt{nq}})
  \right )
  + \lambda \normin{\Beta^*}_{2, 1}
  && \text{since }  \normin{\cdot}_F \leq \left (
    \normin{\cdot}_F \infconv \big(\tfrac{1}{2 \sigmamin} \normin{\cdot}_F^2
    + \tfrac{\sigmamin}{2} \big)
    \right )  \\
  \tfrac{1}{\sqrt{nq}} \normin{\hat \Epsilon}_F
  &\leq
  \tfrac{1}{\sqrt{nq}} \normin{\Epsilon}_F
  + \lambda \normin{\Beta^*}_{2, 1}
  && \text{since } \tfrac{1}{\sqrt{nq}} \normin{\Epsilon}_F \geq \frac{\sigma^*}{\sqrt{2}} \geq \sigmamin\\
  &\leq  2 \sigma^* + \lambda \normin{\Beta^*}_{2, 1}
  && \text{since } \tfrac{1}{\sqrt{nq}} \normin{\Epsilon}_F \leq 2 \sigma^*\\
  & \leq 2\sigma^* + (1 + \eta) \sigma^*
  && \text{since } \lambda \normin{\Beta^*}_{2, 1} \leq (1 + \eta) \sigma^* \\
  \tfrac{1}{\sqrt{nq}} \normin{\hat \Epsilon}_F
  & \leq (3 + \eta ) \sigma^*  \enspace. \label{eq:link_f_sigma}
\end{align*}
\end{proof}
\end{lemma}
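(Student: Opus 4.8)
The plan is to run the same minimality argument as for the unsmoothed multitask \sqrtlasso{} \eqref{pb:multitask_sqrt}, exploiting two elementary properties of the smoothed Frobenius norm recorded in \Cref{prop:smooth_fro}. Write $\phi \eqdef \normin{\cdot}_F \infconv \omega_{\sigmamin}$ for the datafit of the smoothed estimator \eqref{pb:multitask_smoothed_sqrt}. The two facts I need are: (i) $\phi \geq \normin{\cdot}_F$ pointwise, and (ii) $\phi(Z) = \normin{Z}_F$ as soon as $\normin{Z}_F \geq \sigmamin$. Both read off immediately from the variational form $\phi(Z) = \min_{\sigma \geq \sigmamin} \tfrac{1}{2\sigma}\normin{Z}_F^2 + \tfrac{\sigma}{2}$, whose unconstrained minimizer over $\sigma > 0$ is $\sigma = \normin{Z}_F$: restricting to $\sigma \geq \sigmamin$ only raises the minimum, which gives (i), and leaves it unchanged when the minimizer $\normin{Z}_F$ is feasible, which gives (ii).

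First I would write the optimality inequality. Since $\Betahat$ minimizes $\phi\big(\tfrac{Y - X\Beta}{\sqrt{nq}}\big) + \lambda\normin{\Beta}_{2,1}$ and $Y - X\Betastar = \Epsilon$, comparing the objective at $\Betahat$ with its value at $\Betastar$ gives
\[
  \phi\Big(\tfrac{\hat\Epsilon}{\sqrt{nq}}\Big) + \lambda\normin{\Betahat}_{2,1}
  \leq
  \phi\Big(\tfrac{\Epsilon}{\sqrt{nq}}\Big) + \lambda\normin{\Betastar}_{2,1} \enspace.
\]
On the left I drop the nonnegative term $\lambda\normin{\Betahat}_{2,1}$ and apply (i), so the left-hand side is at least $\tfrac{1}{\sqrt{nq}}\normin{\hat\Epsilon}_F$.

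For the right-hand side the crucial observation is that on $\cA_1$ one has $\tfrac{1}{\sqrt{nq}}\normin{\Epsilon}_F > \tfrac{\sigma^*}{\sqrt 2} \geq \sigmamin$, the last inequality being exactly the standing assumption $\sigmamin \leq \sigma^*/\sqrt 2$. Hence $\tfrac{\Epsilon}{\sqrt{nq}}$ falls in the regime where $\phi$ agrees with the plain Frobenius norm, so by (ii) the right-hand datafit is exactly $\tfrac{1}{\sqrt{nq}}\normin{\Epsilon}_F$. The optimality inequality thus collapses to $\tfrac{1}{\sqrt{nq}}\normin{\hat\Epsilon}_F \leq \tfrac{1}{\sqrt{nq}}\normin{\Epsilon}_F + \lambda\normin{\Betastar}_{2,1}$, and I close using the remaining $\cA_1$ bound $\tfrac{1}{\sqrt{nq}}\normin{\Epsilon}_F \leq 2\sigma^*$ together with \Cref{assum:high_noise}, $\lambda\normin{\Betastar}_{2,1} \leq \eta\sigma^*$, to reach $(2+\eta)\sigma^*$.

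The one point requiring care is the asymmetric treatment of the two branches of $\phi$: I use the \emph{inequality} $\phi \geq \normin{\cdot}_F$ on the $\hat\Epsilon$ side but the \emph{exact identity} $\phi = \normin{\cdot}_F$ on the $\Epsilon$ side, the latter being licensed precisely by the high-noise event $\cA_1$ combined with the smoothing bound $\sigmamin \leq \sigma^*/\sqrt 2$. Apart from this, the derivation mirrors the unsmoothed case.
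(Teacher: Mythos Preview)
Your argument is correct and follows the paper's proof essentially line for line: minimality of $\Betahat$, the lower bound $\phi \geq \normin{\cdot}_F$ on the $\hat\Epsilon$ side, the exact identity $\phi = \normin{\cdot}_F$ on the $\Epsilon$ side via $\cA_1$ and $\sigmamin \leq \sigma^*/\sqrt{2}$, and finally $\tfrac{1}{\sqrt{nq}}\normin{\Epsilon}_F \leq 2\sigma^*$ together with \Cref{assum:high_noise}. In fact your write-up is slightly tighter than the paper's, which invokes the weakened bound $\lambda\normin{\Betastar}_{2,1} \leq (1+\eta)\sigma^*$ and ends with $(3+\eta)\sigma^*$; you apply \Cref{assum:high_noise} as stated and recover the $(2+\eta)\sigma^*$ claimed in the lemma.
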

\begin{lemma}\label{lem:bound:residuals_sgcl}
  For \Cref{pb:conco_multivar,pb:smoothed_multivar_sqrt} we have:
    \begin{equation}
      \normin{X^\top \hat \Epsilon}_{2, \infty}
      \leq \normin{\hat S }_2 \normin{ X^\top \hat S^{-1}\hat \Epsilon}_{2, \infty}\label{eq:bound:XEpsilon_smooth_conco_multivar}\enspace.
  \end{equation}
\end{lemma}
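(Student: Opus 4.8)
The plan is to prove the inequality
$$
\normin{X^\top \hat \Epsilon}_{2, \infty}
\leq \normin{\hat S}_2 \normin{X^\top \hat S^{-1} \hat \Epsilon}_{2, \infty}
$$
by inserting the identity $\hat S \hat S^{-1}$ and exploiting the fact that $\hat S$ is symmetric positive definite, so multiplying by $\hat S^{-1}$ and then by $\hat S$ is a legitimate factorization. First I would write $X^\top \hat \Epsilon = (X^\top \hat S^{-1}) (\hat S \hat \Epsilon)$ --- but this places $\hat S$ on the wrong side; instead the correct route is to write the $j$-th row of $X^\top \hat \Epsilon$ and bound its $\ell_2$ norm row by row, since $\normin{\cdot}_{2,\infty}$ is the maximum over rows of the Euclidean norm. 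For a fixed $j \in [p]$, the row $(X^\top \hat \Epsilon)_{j:}$ equals $X_{:j}^\top \hat \Epsilon$, and I would insert $\hat S^{-1} \hat S$ so that $X_{:j}^\top \hat \Epsilon = (X_{:j}^\top \hat S^{-1}) (\hat S \hat \Epsilon)$. Wait --- since $\hat S$ acts on the $n$-dimensional (sample) space on the left of $\hat \Epsilon$, the clean decomposition is $X_{:j}^\top \hat \Epsilon = X_{:j}^\top \hat S \hat S^{-1} \hat \Epsilon$, grouping as $(X_{:j}^\top \hat S)(\hat S^{-1} \hat \Epsilon)$ or using submultiplicativity the other way.

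Concretely, I would argue: for each $j$, $\normin{(X^\top \hat \Epsilon)_{j:}}_2 = \normin{X_{:j}^\top \hat S^{-1} \hat S \, \hat \Epsilon}_2$, which I do \emph{not} want because $\hat S$ must land next to $\hat \Epsilon$. The honest factorization is $X^\top \hat \Epsilon = X^\top \hat S \cdot \hat S^{-1} \hat \Epsilon$; applying the operator-norm submultiplicativity to the row $X_{:j}^\top \hat S (\hat S^{-1}\hat \Epsilon)$ gives $\normin{X_{:j}^\top \hat S (\hat S^{-1}\hat \Epsilon)}_2 \leq \normin{\hat S}_2 \normin{X_{:j}^\top \hat S^{-1} \hat \Epsilon}_2$ provided I use that $\hat S$ is symmetric, so that $\normin{\hat S}_2$ can be pulled out whether it sits on the left or the right. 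The cleanest statement uses $\hat S = \hat S^\top$ and the identity $X^\top \hat \Epsilon = \hat S^{-1}\hat \Epsilon$ pre-multiplied by $\hat S$ on the sample index; taking the maximum over $j$ of both sides yields the claimed $\normin{\cdot}_{2,\infty}$ bound.

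Thus the key steps, in order, are: (i) fix a row index $j$ and reduce to bounding $\normin{(X^\top \hat \Epsilon)_{j:}}_2$; (ii) insert $\hat S \hat S^{-1} = \Id_n$ in the sample space, writing $(X^\top \hat \Epsilon)_{j:} = (X^\top \hat S)_{j:}\,\hat S^{-1}\hat \Epsilon$ viewed appropriately as a matrix product; (iii) apply submultiplicativity of the operator norm together with the symmetry and positive-definiteness of $\hat S$ (which is guaranteed here since $\hat S = U \diag([\gamma_i]_{\sigmamin}^{\sigmamax}) U^\top \succeq \sigmamin \Id_n \succ 0$ for the smoothed problem, and $\hat S = (\tfrac 1q \hat\Epsilon \hat\Epsilon^\top)^{1/2}$ in the full-rank concomitant case) to extract the factor $\normin{\hat S}_2$; and (iv) take the maximum over $j \in [p]$ to convert the rowwise bounds into the $\normin{\cdot}_{2,\infty}$ inequality.

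The main obstacle --- really a bookkeeping subtlety rather than a deep difficulty --- is getting the placement of $\hat S$ on the correct side of the matrix product so that submultiplicativity applies cleanly: $\hat S \in \bbR^{n \times n}$ multiplies $\hat \Epsilon \in \bbR^{n \times q}$ from the left, while $X \in \bbR^{n \times p}$ contracts the same sample index, so one must be careful that the $\ell_2$ norm being bounded is that of an $n$-dimensional vector and that $\normin{\hat S}_2$ is the correct spectral norm to factor out. The symmetry $\hat S = \hat S^\top$ is what makes the two possible placements equivalent and guarantees $\normin{\hat S^{-1}}_2 \normin{\hat S}_2$ does not secretly appear; no genuine inequality beyond Cauchy--Schwarz/submultiplicativity is needed, so once the indices are aligned the proof is immediate.
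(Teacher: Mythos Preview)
Your approach has a genuine gap. You propose to insert $\hat S\hat S^{-1}$ on the sample ($n$-dimensional) side, writing $X^\top\hat\Epsilon=(X^\top\hat S)(\hat S^{-1}\hat\Epsilon)$, and then to ``pull out'' $\normin{\hat S}_2$ by submultiplicativity to obtain, row by row,
\[
\normin{X_{:j}^\top\hat\Epsilon}_2\;\le\;\normin{\hat S}_2\,\normin{X_{:j}^\top\hat S^{-1}\hat\Epsilon}_2 .
\]
But this inequality is \emph{false} for an arbitrary symmetric positive definite $\hat S$: with $n=2$, $q=1$, $\hat\Epsilon=(2,-1)^\top$, $\hat S=\diag(2,1)$ and $X_{:j}=(1,1)^\top$, one gets $X_{:j}^\top\hat\Epsilon=1$ while $X_{:j}^\top\hat S^{-1}\hat\Epsilon=0$, so the right–hand side vanishes. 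The point is that $\hat S$ acts on $\bbR^n$ whereas the row vectors you are comparing live in $\bbR^q$; there is no matrix acting on the $q$-side that implements ``multiply by $\hat S$ on the left of $\hat\Epsilon$'', and symmetry of $\hat S$ does not rescue this. The rambling in your steps (i)--(iii), where you keep swapping the placement of $\hat S$, is a symptom of this obstruction, not mere bookkeeping.

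What makes the lemma true is the \emph{specific} relationship between $\hat S$ and $\hat\Epsilon$, which you never invoke. The paper's proof writes the SVD $\tfrac{1}{\sqrt q}\hat\Epsilon=UDV^\top$ and uses that $\hat S$ is built from the \emph{same} left singular vectors $U$ (namely $\hat S=UDU^\top$, respectively $\hat S=U\diag([\gamma_i]_{\sigmamin}^{\sigmamax})U^\top$). This yields an exact factorization on the task side,
\[
X^\top\hat\Epsilon=(X^\top\hat S^{-1}\hat\Epsilon)\,M,\qquad M=VDV^\top\ \text{(resp.\ }V\diag([\gamma_i]_{\sigmamin}^{\sigmamax})V^\top),
\]
with $M\in\bbR^{q\times q}$ and $\normin{M}_2=\normin{\hat S}_2$. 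Now the row-wise bound $\normin{(AM)_{j:}}_2\le\normin{M}_2\normin{A_{j:}}_2$ is legitimate submultiplicativity, and taking the maximum over $j$ gives the claim. To fix your argument you must use this SVD-based identity; a generic insertion of $\hat S\hat S^{-1}$ is insufficient.
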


\begin{proof}

  \textbf{Concomitant multivariate square-root} ($\hat S = (\hat \Epsilon \hat \Epsilon^\top)^{1 / 2} $)

  We recall that  $UDV^\top$ is a singular value decomposition of $\frac{1}{\sqrt{q}} \hat \Epsilon$, with
  $D = \diag(\gamma_i) \in \bbR^{r\times r}$,
  $U \in \bbR^{n \times r}$
  and $V \in \bbR^{q \times r}$ such that $U^\top U = V^\top V = \Id_r$.

  We have, observing that $\hat S^{-1} \hat \Epsilon = (U D^2 U^\top)^{-1/2} \sqrt{q} U D V^\top = \sqrt{q} U V^\top$:
  \begin{align*}
      X^\top \hat \Epsilon
      &= \sqrt{q} X^\top U D V^\top \\
      &= \sqrt{q} X^\top U V^\top V D V^\top \\
      &= X^\top \hat S^{-1}\hat \Epsilon V D V^\top \numberthis \enspace.
  \end{align*}
  Therefore,
  \begin{align*}
      \normin{X^\top \hat \Epsilon}_{2, \infty}
      &\leq \normin{V D V^\top}_2 \normin{ X^\top \hat S^{-1}\hat \Epsilon}_{2, \infty} \\
      &\leq \normin{\hat S }_2 \normin{ X^\top \hat S^{-1}\hat \Epsilon}_{2, \infty} \numberthis \label{eq:bound:XEpsilon_smoothed_conco_multivar}\enspace.
  \end{align*}

  \Cref{eq:bound:XEpsilon_conco_multivar} also holds for \Cref{pb:smoothed_multivar_sqrt}:

\textbf{Smoothed concomitant multivariate square-root} (
  $\hat S = U \diag([\gamma_i]_{\sigmamin}^{\sigmamax}) U^\top  $)

We recall that  $UDV^\top$ is a singular value decomposition of $\frac{1}{\sqrt{q}} \hat \Epsilon$, with
$D = \diag(\gamma_i) \in \bbR^{n\times n}$,
$U \in \bbR^{n \times n}$
and $V \in \bbR^{q \times n}$ such that $U^\top U = V^\top V = \Id_n$.

We have, observing that $\hat S^{-1} \hat \Epsilon = U \diag([\gamma_i]_{\sigmamin}^{\sigmamax})^{-1} U^\top \sqrt{q} U D V^\top = \sqrt{q} U \diag([\gamma_i]_{\sigmamin}^{\sigmamax})^{-1} \diag(\gamma_i) V^\top$:
\begin{align*}
    X^\top \hat \Epsilon
    &= \sqrt{q} X^\top U \diag(\gamma_i)  V^\top \\
    &= \sqrt{q} X^\top U \diag([\gamma_i]_{\sigmamin}^{\sigmamax})^{-1} \diag(\gamma_i)  \diag([\gamma_i]_{\sigmamin}^{\sigmamax}) V^\top \\
    &= \sqrt{q} X^\top U \diag([\gamma_i]_{\sigmamin}^{\sigmamax})^{-1} \diag(\gamma_i)  V^T V\diag([\gamma_i]_{\sigmamin}^{\sigmamax}) V^\top \\
    &= \sqrt{q} X^\top \hat S^{-1} \hat \Epsilon V\diag([\gamma_i]_{\sigmamin}^{\sigmamax}) V^\top
\end{align*}
Therefore,
\begin{align*}
    \normin{X^\top \hat \Epsilon}_{2, \infty}
    &\leq \normin{ X^\top \hat S^{-1}\hat \Epsilon}_{2, \infty} \normin{V \diag([\gamma_i]_{\sigmamin}^{\sigmamax}) V^\top}_2  \\
    &\leq \normin{ X^\top \hat S^{-1}\hat \Epsilon}_{2, \infty} \normin{\hat S }_2 \numberthis \enspace.
\end{align*}

\end{proof}


\onecolumn
\section{Concentration inequalities}
\label{app_sec:concentration}

%
The following theorem is a powerful tool to show a lot of concentration inequalities:
\begin{theorem}[{\cite[Thm B.6 p. 221]{Giraud14}}] \label{thm:esp_lipschitz_funct}
  Assume that $F: \bbR^d \to \bbR$ is 1-Lipschitz and $z$ has $\cN(0, \sigma^2 \Id_d)$ as a distribution, then there exists a variable $\xi$, exponentially distributed with parameter 1, such that:
  \begin{equation}
    F(z) \leq \bbE[F(z)] + \sqrt{2\xi} \enspace.
  \end{equation}
\end{theorem}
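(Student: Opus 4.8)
The plan is to split the statement into two essentially independent pieces: a subgaussian concentration bound for $F(z)$ around its mean, and a coupling that repackages that tail bound as an almost-sure inequality involving an exponential variable. After reducing to the standard normalization — writing $z = \sigma z_0$ with $z_0 \sim \cN(0, \Id_d)$, so that $z_0 \mapsto F(\sigma z_0)$ is $\sigma$-Lipschitz — it suffices to treat the case $z \sim \cN(0, \Id_d)$ and then restore the $\sigma$ factor at the end.

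First I would establish the Gaussian concentration inequality: for any $1$-Lipschitz $F$,
\[
  \bbP\big(F(z) - \bbE[F(z)] \geq t\big) \leq e^{-t^2/2}, \qquad t \geq 0 \enspace.
\]
I would obtain the sharp constant via the Herbst argument applied to the Gaussian logarithmic Sobolev inequality. Assuming first that $F$ is smooth, set $H(\lambda) = \bbE[e^{\lambda F(z)}]$ and apply the log-Sobolev inequality $\mathrm{Ent}(g^2) \leq 2\,\bbE\|\nabla g\|^2$ to $g = e^{\lambda F/2}$; since $\|\nabla F\| \leq 1$ this yields $\|\nabla g\|^2 \leq \tfrac{\lambda^2}{4} e^{\lambda F}$, turning the inequality into the differential relation $\lambda H'(\lambda) - H(\lambda)\log H(\lambda) \leq \tfrac{\lambda^2}{2} H(\lambda)$. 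Writing $K(\lambda) = \lambda^{-1}\log H(\lambda)$, this reads $K'(\lambda) \leq 1/2$, and integrating from $\lambda \to 0^+$ (where $K(\lambda) \to \bbE[F(z)]$) gives the subgaussian moment bound $\bbE[e^{\lambda(F(z) - \bbE F(z))}] \leq e^{\lambda^2/2}$. A Chernoff bound optimized at $\lambda = t$ produces the stated tail, and the general (merely $1$-Lipschitz) case follows by mollifying $F$ into smooth $1$-Lipschitz approximations and passing to the limit.

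Second, I would convert this tail into the claimed coupling. Set $V = \tfrac12\big((F(z) - \bbE[F(z)])_+\big)^2$. For every $s \geq 0$,
\[
  \bbP(V \geq s) = \bbP\big(F(z) - \bbE[F(z)] \geq \sqrt{2s}\big) \leq e^{-s} \enspace,
\]
so $V$ is stochastically dominated by an $\mathrm{Exp}(1)$ law. By the standard stochastic-domination coupling (the quantile transform), on a suitable probability space there exists $\xi \sim \mathrm{Exp}(1)$ with $V \leq \xi$ almost surely. Then $F(z) - \bbE[F(z)] \leq (F(z) - \bbE[F(z)])_+ = \sqrt{2V} \leq \sqrt{2\xi}$, which is the claim (up to the $\sigma$ factor carried over from the initial rescaling).

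The main obstacle is the first step: securing the sharp constant $1/2$ in the subgaussian bound. A soft interpolation argument — coupling $z$ with an independent copy and rotating through $z_\theta = z\sin\theta + z'\cos\theta$, then applying Jensen twice — reaches a subgaussian tail but only with a suboptimal constant ($\pi^2/8$ in the exponent), which would degrade the final inequality. Extracting the optimal constant is precisely where the genuine analytic content lies, namely the Gaussian log-Sobolev (equivalently isoperimetric) inequality. The coupling step, by contrast, is a routine consequence of stochastic domination and is simply where the exponential variable $\xi$ is manufactured.
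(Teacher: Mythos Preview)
The paper does not prove this theorem at all: it is stated as a citation from \cite[Thm B.6 p.~221]{Giraud14} and used as a black box in the proofs of the concentration lemmas (\Cref{lem:control_event_lasso,lem:control_event_mtl_lasso}). There is therefore no proof in the paper to compare against.

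Your argument is the standard one and is correct in outline: Gaussian log-Sobolev plus Herbst gives the sharp subgaussian tail $\bbP(F(z)-\bbE F(z)\geq t)\leq e^{-t^2/2}$ for $z\sim\cN(0,\Id_d)$, and the quantile coupling turns stochastic domination of $V=\tfrac12((F(z)-\bbE F(z))_+)^2$ by $\mathrm{Exp}(1)$ into the almost-sure bound. Two small remarks. First, the statement as printed in the paper omits the $\sigma$: for $z\sim\cN(0,\sigma^2\Id_d)$ and $F$ $1$-Lipschitz the correct conclusion is $F(z)\leq\bbE[F(z)]+\sigma\sqrt{2\xi}$, which is exactly what your rescaling produces and what the paper actually uses downstream (e.g.\ in the proof of \Cref{lem:control_event_mtl_lasso}). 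Second, the coupling step may require enlarging the probability space to accommodate $\xi$; the phrasing ``there exists a variable $\xi$'' in the statement should be read in that sense, and you handle this correctly.
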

\begin{lemma}\label{lem:concentration}
    \begin{lemmaenum}
        \item \label{lem:control_event_lasso}
        Let $\cC_1 \eqdef \left \{ \frac{1}{n} \normin{X^\top \varepsilon}_\infty \leq \frac{\lambda}{2} \right \}$.
        Take $\lambda = A \sigma^* \sqrt{(\log p) / n}$ and $A > 2 \sqrt{2}$, then:
        \begin{equation}
          \bbP (\cC_1) \geq 1 - 2 p^{1-\frac{A^2}{8}} \enspace .
        \end{equation}
        \item \label{lem:control_event_sqrt_lasso}
        Let $\cC_1' \eqdef
        \left \{ \frac{1}{n} \normin{X^\top \varepsilon}_\infty \leq \frac{\lambda}{2} \frac{\sigma^*}{\sqrt{2}} \right \}$.
        Take $\lambda = A  \sqrt{(2\log p) / n}$ and $A > 2 \sqrt{2}$, then:
        \begin{equation}
          \bbP (\cC_1) \geq 1 - 2 p^{1-\frac{A^2}{8}} \enspace .
        \end{equation}
        \item \label{lem:control_event_mtl_lasso}
        Let $\cC_2 \eqdef \left \{ \frac{1}{nq} \normin{X^\top \Epsilon}_{2,\infty} \leq \frac{\lambda}{2} \right \}$.
        Take $\lambda = \frac{2\sigma^*}{\sqrt{nq}} \left (1 + A \sqrt{\frac{\log p}{q}} \right )$ and $A > \sqrt{2}$, then:
        \begin{equation}
          \bbP (\cC_2) \geq 1 - p^{1 - A^2/2} \enspace .
        \end{equation}
        Another possible control is \citet[Proof of Lemma 3.1, p. 6]{Lounici_Pontil_Tsybakov_vandeGeer09}.
        Let $A > 8$ and $\lambda = \frac{2 \sigma^*}{\sqrt{n q}} \sqrt{1 + \frac{A \log p}{\sqrt{q}} }$, then:
        \begin{equation}
          \bbP(\cC_2) \geq 1 - p^{\min(8 \log p, A \sqrt{q} / 8)} \enspace .
        \end{equation}
        \item \label{lem:control_event_mtl_sqrt_lasso}
        Let $\cC_3 \eqdef \left \{ \frac{1}{nq} \normin{X^\top \Epsilon}_{2,\infty} \leq \frac{\lambda \sigma^*}{2\sqrt{2}} \right \}$.
        Take $\lambda = \frac{2\sqrt{2}}{ \sqrt{nq}} \left (1 + A \sqrt{\frac{\log p}{q}} \right )$ and $A > \sqrt{2}$, then:
        \begin{equation}
          \bbP (\cC_3) \geq 1 - p^{1- A^2/2} \enspace .
        \end{equation}
        \item \label{lem:control_event_chi_square}
        Let $\cC_4 \eqdef \left \{ \frac{\sigma^*}{\sqrt{2}} < \frac{\normin{\Epsilon}_F}{\sqrt{nq}} < 2 \sigma^* \right\}$.
        Then:
        \begin{equation}
          \bbP (\cC_4) \geq 1 - (1 + e^2) e^{-nq/24} \enspace .
        \end{equation}
        \item \label{lem:control_min_sum_gaussian}
        Let $\cC_5 \eqdef \left \{ \left (\dfrac{\Epsilon \Epsilon^\top}{q} \right )^{\frac{1}{2}} \succ \dfrac{\sigma^*}{\sqrt{2}} \right\}$. Then with $c\geq \frac{1}{32}$:
        \begin{equation}
          \bbP(\cC_5) \geq 1 - n e^{-cq/(2n)} \enspace .
        \end{equation}
        \item \label{lem:control_max_sum_gaussian}
        Let $\cC_6 \eqdef \left \{
          2 \sigma^* \succ
          \left (
        \dfrac{\Epsilon \Epsilon^\top}{q} \right )^{\frac{1}{2}}
          \right\}$. Then with $c\geq \frac{1}{32}$:
        \begin{equation}
          \bbP(\cC_6) \geq 1 - n e^{-cq / n} \enspace .
        \end{equation}
        \item \label{lem:control_A1}
        Let us recall that $\cA_1 = \left \{ \tfrac{\normin{X^\top \Epsilon}_{2, \infty}}{\sqrt{nq}\normin{\Epsilon}_F} \leq \tfrac{\lambda}{2}   \right \}
          \cap \left \{ \tfrac{\sigma^*}{\sqrt{2}} < \tfrac{\normin{\Epsilon}_F}{\sqrt{nq}} < 2 \sigma^* \right \}$, we have
          \begin{equation}
            \bbP(\cA_1) \geq 1 - p^{1- A^2/2} - (1 + e^2) e^{-nq/24}
            \enspace .
          \end{equation}
        \item \label{lem:control_A2}
          Let us recall that $ \cA_2 = \left \{ \tfrac{\normin{X^\top \Epsilon  }_{2, \infty}}{nq} \leq \tfrac{\lambda \sigma^*}{2 \sqrt{2}} \right \}
        \cap \left \{ 2 \sigma^* \Id_q
          \succ (\tfrac{\Epsilon \Epsilon^\top}{n})^{\frac{1}{2}}
          \succ \tfrac{\sigma^*}{\sqrt{2}} \Id_q \right \}$
    \end{lemmaenum}
      \end{lemma}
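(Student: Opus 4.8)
The plan is to treat every item through the single lens of Gaussian concentration (\Cref{thm:esp_lipschitz_funct}) combined with union bounds, exploiting the normalization $\normin{X_{:j}}_2^2 = n$ that follows from $\Psi_{jj} = 1$ in \Cref{assum:mut_inco}. This normalization makes each $X_{:j}^\top \Epsilon$ a centered Gaussian vector in $\bbR^q$ with covariance $n {\sigma^*}^2 \Id_q$, so that $\normin{X_{:j}^\top \Epsilon}_2 / (\sqrt n \sigma^*)$ is distributed as the Euclidean norm of a standard Gaussian in $\bbR^q$, which is the elementary object on which everything rests.

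For the Dantzig-type events (items \Cref{lem:control_event_lasso}--\Cref{lem:control_event_mtl_sqrt_lasso}) I would first observe that $g \mapsto \normin{g}_2$ is $1$-Lipschitz with $\bbE \normin{g}_2 \leq \sqrt q$ for $g \sim \cN(0, \Id_q)$; \Cref{thm:esp_lipschitz_funct} then gives, for each fixed column $j$, an upper tail of the form $\normin{X_{:j}^\top \Epsilon}_2 \leq \sigma^* \sqrt n (\sqrt q + t)$ with failure probability $e^{-t^2/2}$. Choosing $t = A \sqrt{\log p}$ and taking a union bound over the $p$ columns turns $e^{-t^2/2} = p^{-A^2/2}$ into $p^{1 - A^2/2}$, which is exactly the claimed bound for \Cref{lem:control_event_mtl_lasso}; the thresholds of \Cref{lem:control_event_mtl_sqrt_lasso} differ only by a rearrangement of $\sqrt 2$ factors, so that event coincides with \Cref{lem:control_event_mtl_lasso} and inherits the same probability. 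The single-task items \Cref{lem:control_event_lasso,lem:control_event_sqrt_lasso} are the $q = 1$ specialization, where the $\chi$ variable degenerates to a scalar Gaussian and the standard Gaussian tail yields the factor $2 p^{1 - A^2/8}$.

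For the magnitude events I would split according to the matrix functional at play. For \Cref{lem:control_event_chi_square}, $\normin{\Epsilon}_F^2 / {\sigma^*}^2$ is a $\chi^2$ with $nq$ degrees of freedom, and a two-sided Laurent--Massart (or Lipschitz-plus-expectation) deviation bound at the levels $nq/2$ and $4 nq$ produces the stated $(1 + e^2) e^{-nq/24}$. For the extreme-singular-value events \Cref{lem:control_min_sum_gaussian,lem:control_max_sum_gaussian}, I would use that $\Epsilon / \sigma^*$ is an $n \times q$ standard Gaussian matrix with $q \geq n$, that $s_{\min}$ and $s_{\max}$ are $1$-Lipschitz in the Frobenius norm, and the Gordon / Davidson--Szarek expectation bounds $\bbE\, s_{\max} \leq \sqrt q + \sqrt n$ and $\bbE\, s_{\min} \geq \sqrt q - \sqrt n$; applying \Cref{thm:esp_lipschitz_funct} and rewriting $s_{\min}(\Epsilon)/\sqrt q$ and $s_{\max}(\Epsilon)/\sqrt q$ as the eigenvalues of $(\Epsilon\Epsilon^\top / q)^{1/2}$ yields the one-sided controls $\succ \tfrac{\sigma^*}{\sqrt 2}\Id_n$ and $\prec 2\sigma^*\Id_n$. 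I expect this step to be the main obstacle: the margins $\sqrt q - \sqrt n \geq \sqrt{q/2}$ and $\sqrt q + \sqrt n \leq 2\sqrt q$ only leave room for fluctuations once $q$ is large relative to $n$, and it is the careful bookkeeping of the deviation term against these margins that produces the $n\, e^{-cq/n}$-type rate and pins down the explicit constant $c \geq 1/32$.

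Finally, the two composite events reduce to inclusions plus a union bound on complements. For \Cref{lem:control_A1} I would show $\cC_3 \cap \cC_4 \subseteq \cA_1$: on $\cC_4$ one has $\sqrt{nq}/\normin{\Epsilon}_F < \sqrt 2/\sigma^*$, so multiplying the threshold of $\cC_3$ by this factor converts $\tfrac{\lambda \sigma^*}{2\sqrt 2}$ into $\tfrac{\lambda}{2}$, giving $\bbP(\cA_1) \geq 1 - \bbP(\cC_3^c) - \bbP(\cC_4^c)$, which is the announced bound. For \Cref{lem:control_A2} I would write $\cA_2 = \cC_3 \cap \cC_5 \cap \cC_6$ and union bound the three complements; since $\cC_5$ contributes $n\, e^{-cq/(2n)}$ and $\cC_6$ contributes $n\, e^{-cq/n}$, relabelling $c$ as $c/2$ bounds both by $n\, e^{-cq/n}$ with the new constant $c \geq 1/64$, so their sum is at most $2 n\, e^{-cq/n}$ and $\bbP(\cA_2) \geq 1 - p^{1 - A^2/2} - 2 n\, e^{-cq/n}$, as required.
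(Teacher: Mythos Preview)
Your outline is essentially the paper's own argument for all items except the extreme singular value events. For \Cref{lem:control_event_lasso}--\Cref{lem:control_event_mtl_sqrt_lasso} the paper does exactly what you describe (union bound over columns, Gaussian/chi tail via \Cref{thm:esp_lipschitz_funct}, with the bound $\bbE\normin{g}_2\leq\sqrt q$); for \Cref{lem:control_event_chi_square} it simply cites \citet{Giraud14}; and for \Cref{lem:control_A1,lem:control_A2} it performs precisely the inclusion $\cC_3\cap\cC_4\subset\cA_1$ and the decomposition $\cA_2=\cC_3\cap\cC_5\cap\cC_6$ that you spell out, including the halving of $c$.

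The genuine divergence is \Cref{lem:control_min_sum_gaussian,lem:control_max_sum_gaussian}. The paper does not argue them at all: it invokes \citet[Cor.~7.2]{Gittens_Tropp11}, a matrix-concentration bound for sample covariance whose native output is exactly of the form $n\,e^{-cq/n}$, with the dimension prefactor $n$ coming from the matrix-Laplace/union-over-eigenvalues machinery. Your proposed route via the $1$-Lipschitzness of $s_{\min},s_{\max}$ and the Gordon/Davidson--Szarek expectation bounds is perfectly valid, but it yields tails of the shape $e^{-(\sqrt q-\sqrt n)^2/2}$ with \emph{no} prefactor $n$; no amount of ``careful bookkeeping'' turns this into the stated $n\,e^{-cq/n}$ with the specific constant $c\geq 1/32$. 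In the regime $q\gg n$ your bound is in fact sharper and implies the paper's, so the lemma still follows, but you should be aware that the two approaches produce structurally different inequalities rather than the same one with different constants.
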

  \begin{proof}
     \Cref{lem:control_event_lasso}:
    \begin{align}
        \bbP\left(\cC_1^c\right)
        & \leq p \, \bbP \left(\left|X_{:1}^\top \varepsilon\right| \geq n\lambda / 2\right) \nonumber \\
        & \leq p \, \bbP \left(\left|\varepsilon_1 \right| \geq \sqrt{n} \lambda / 2\right) \nonumber & \text{ $(\norm{X_{:1}}=\sqrt{n})$ }\\
        & \leq p \, \bbP \left(\left|\varepsilon_1 \right| / \sigma \geq \sqrt{n} \lambda / (2\sigma)\right) \nonumber \\
        & \leq 2 p \exp \left(-\frac{n \lambda^{2}}{8 \sigma^{2}}\right) & \text{(\Cref{thm:esp_lipschitz_funct})}\nonumber \\
        & \leq 2 p^{1-\frac{A^{2}}{8}} & \text{($\lambda = A \sigma \sqrt{(\log p) / n}$)} \enspace.
    \end{align}
    \Cref{lem:control_event_sqrt_lasso} is a direct consequence of \Cref{lem:control_event_lasso}.

    \Cref{lem:control_event_mtl_lasso}:
    since $\Epsilon$ is isotropic, the law of $u^\top \Epsilon$ is the same for all vectors $u \in \bbR^n$ of same norm.
    In particular, $X_{:1}^\top \Epsilon$ and $\sqrt{n} e_1^\top \Epsilon = \sqrt{n} \Epsilon_{1:}$ have the same law.

    The variable $\frac{1}{\sigma} \normin{\Epsilon_{1 :}}_2$ is a \emph{chi variable with $q$ degrees of freedom}, and
    \begin{equation}
      \frac{1}{\sigma} \bbE[\normin{\Epsilon_{1 :}}_2] = \frac{\sqrt{2}\Gamma(\tfrac{q + 1}{2})}{\Gamma(\frac{q}{2})} \in \left[ \frac{q}{\sqrt{q + 1}}, \sqrt{q} \right] \enspace,
    \end{equation}
    where the bound can be proved by recursion.
    We have:
    \begin{align}
        \bbP\left(\cC_2^c\right)
        & \leq p \, \bbP \left(\normin{X_{:1}^\top \Epsilon}_2 \geq q n\lambda / 2\right) \nonumber \\
        & \leq p \, \bbP \left( \normin{\Epsilon_{1:}}_2 \geq q \sqrt{n} \lambda / 2\right) \nonumber & \text{(by isotropy of $\Epsilon$)} \\
        & \leq p \, \bbP \left( \normin{\Epsilon_{1:}}_2
        \geq \sigma \sqrt{q} + A \sigma  \sqrt{\log p} \right) \nonumber & \text{($\lambda = \tfrac{2\sigma}{q \sqrt{n}}(\sqrt{q} + A \sqrt{\log p})$)}\\
        & \leq p \, \bbP \left( \normin{\Epsilon_{1:}}_2
        \geq \bbE(\normin{\Epsilon_{1:}}_2) + A \sigma \sqrt{\log p} \right) \nonumber & \text{($ \sigma \sqrt{q} \geq \bbE(\normin{\Epsilon_{1:}}_2)$}\\
        & \leq p^{1 - \tfrac{A^2}{2}} &  \text{(\Cref{thm:esp_lipschitz_funct}) \enspace.}
    \end{align}
    The proof of the other control of $\cA_2$ can be found in \citet[Proof of Lemma 3.1, p. 6]{Lounici_Pontil_Tsybakov_vandeGeer09}.

    \Cref{lem:control_event_mtl_sqrt_lasso} is a direct consequence of \Cref{lem:control_event_mtl_lasso}.

    Proof of \Cref{lem:control_event_chi_square} can be found in \citet[Proof of Lemma 5.4 p. 112]{Giraud14}, who control the finer event $\{ \frac{\sigma}{\sqrt{2}} \leq \frac{\normin{\varepsilon}}{\sqrt{n}} \leq (2 - \frac{1}{\sqrt{2}}) \sigma \}$.

    Proof of \Cref{lem:control_min_sum_gaussian,lem:control_max_sum_gaussian} are particular cases of \citet[Cor. 7.2, p. 15]{Gittens_Tropp11}.

    Proof of \Cref{lem:control_A1} is done using \Cref{lem:control_event_mtl_sqrt_lasso,lem:control_event_chi_square}.
    Indeed we have $A_1 \supset
    \left \{ \tfrac{1}{nq} \normin{X^\top \Epsilon}_{2, \infty} \leq \tfrac{\lambda \sigma^*}{2 \sqrt{2}}  \right \}
   \cap \left \{ \frac{\sigma^*}{\sqrt{2}} < \tfrac{\normin{\Epsilon}_F}{\sqrt{nq}} < 2 \sigma^* \right \} = \cC_3 \cap \cC_4$.
   Hence $\bbP (\cC_1) \geq 1 - \bbP(\cC_3^c) - \bbP(\cC_4^c) \geq 1 - p^{1- A^2/2} - (1 + e^2) e^{-nq/24}$.

   Proof of \Cref{lem:control_A2} is done using \Cref{lem:control_event_mtl_sqrt_lasso,lem:control_min_sum_gaussian,lem:control_max_sum_gaussian}.
   Indeed $A_2
  = \cC_3 \cap \cC_5 \cap \cC_6$.
  Hence $\bbP (\cA_2) \geq 1 - \bbP(\cC_3^c) - \bbP(\cC_5^c) - \bbP(\cC_6^c) \geq 1 - p^{1- A^2/2} - 2 n e^{-cq/(2n)}$.
  \end{proof}
%


\section{Single task cases}
\label{app_sec:single_case}
%
The results in \Cref{prop:sqrtmtl_bds_est} are proposed in a multitasks settings, thus they still hold in the single-task setting ($q=1$).
However it is possible to achieve tighter convergence rates in the single task setting, \ie when $q=1$.
%
%
\subsection{Square root Lasso}
\label{app_sub:sqrt_lasso_single_task}
%
%
\begin{proposition}\label{prop:sing_sqrt_lasso_bds_est}
    Let \Cref{assum:gauss_noise} be satisfied, let $\alpha$ satisfy \Cref{assum:mut_inco} and let $\eta$ satisfy \Cref{assum:high_noise}.
    Let $C = 2 \left ( 1 + \tfrac{16}{7(\alpha - 1) } \right )$ and
    \begin{equation}
      \lambda = A  \sqrt{2\log p / n} \enspace .
    \end{equation}
    Then with probability at least $1 - p^{1 - A^2/8} - (1 + e^2) e^{-n/24}$,
    \begin{equation}
        \tfrac{1}{q} \normin{\hat \beta - \beta^*}_{2, \infty}
            \leq  C (2 + \eta) \lambda \sigma
             \enspace .
    \end{equation}
    Moreover if
    \begin{equation}
      \min_{j \in \cS^*} |\beta_{j}^*|
      >
      2 C (2 + \eta) \lambda \sigma \enspace,
    \end{equation}
    then with the same probability:
    \begin{equation}
        \hat \cS = \{
          j \in [p] :
          |\hat \beta_j|
            > C (2 + \eta) \lambda \sigma
        \} \enspace
      \end{equation}
    estimate correctly the true sparsity pattern:
    \begin{equation}
      \hat \cS = \cS^* \enspace .
    \end{equation}
\end{proposition}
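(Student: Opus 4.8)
The plan is to instantiate the three-step recipe of \Cref{sub:motivation_structure} in the single-task regime $q=1$, where $\normin{\cdot}_{2,1}=\normin{\cdot}_1$, $\normin{\cdot}_{2,\infty}=\normin{\cdot}_\infty$ and the datafit is $f(\varepsilon)=\tfrac{1}{\sqrt n}\normin{\varepsilon}_2$, closely following the proof of \Cref{prop:sqrtmtl_bds_est} but tracking the sharper single-task scaling $\lambda = A\sqrt{2\log p/n}$. First I would place myself on the event $\cA \eqdef \cC_1' \cap \cC_4$, with $\cC_1'$ and $\cC_4$ as in \Cref{lem:control_event_sqrt_lasso,lem:control_event_chi_square}; a union bound over these two events then controls the failure probability as announced, $1 - p^{1-A^2/8} - (1+e^2)e^{-n/24}$. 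On $\cC_4$ one has $\varepsilon \neq 0$, so $Z \eqdef \varepsilon/(\sqrt n\normin{\varepsilon}_2)$ is the unique element of $\partial f(\varepsilon)$; combining the numerator bound $\normin{X^\top\varepsilon}_\infty \leq n\lambda\sigma^*/(2\sqrt 2)$ from $\cC_1'$ with the denominator lower bound $\normin{\varepsilon}_2 > \sqrt n\,\sigma^*/\sqrt 2$ from $\cC_4$ gives $\normin{X^\top Z}_\infty \leq \lambda/2$, which is precisely the hypothesis required to invoke \Cref{lem:bds_S_c_S,lem:bound_delta_psidelta}.

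Next I would control $\normin{\Psi\Delta}_\infty$. Writing $n\normin{\Psi\Delta}_\infty = \normin{X^\top(\hat\varepsilon - \varepsilon)}_\infty \leq \normin{X^\top\hat\varepsilon}_\infty + \normin{X^\top\varepsilon}_\infty$, I bound each residual term by $n\lambda(2+\eta)\sigma^*$. For the first term, Fermat's rule for the square-root Lasso yields $\tfrac{1}{n}\normin{X^\top\hat\varepsilon}_\infty \leq \lambda\,\normin{\hat\varepsilon}_2/\sqrt n$ (still valid when $\hat\varepsilon = 0$), and I control $\normin{\hat\varepsilon}_2$ by minimality of $\hat\beta$, together with the upper bound $\normin{\varepsilon}_2/\sqrt n < 2\sigma^*$ from $\cC_4$ and \Cref{assum:high_noise}, obtaining $\tfrac{1}{\sqrt n}\normin{\hat\varepsilon}_2 \leq (2+\eta)\sigma^*$. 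For the second term, $\cC_1'$ gives $\normin{X^\top\varepsilon}_\infty \leq n\lambda\sigma^*/(2\sqrt 2) \leq n\lambda(2+\eta)\sigma^*$. Summing produces $\normin{\Psi\Delta}_\infty \leq 2(2+\eta)\lambda\sigma^*$, and this crude doubling is exactly what turns the constant $1 + \tfrac{16}{7(\alpha-1)}$ of \Cref{lem:bound_delta_psidelta} into $C = 2\big(1 + \tfrac{16}{7(\alpha-1)}\big)$.

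Applying \Cref{lem:bound_delta_psidelta} then delivers the sup-norm bound $\normin{\hat\beta - \beta^*}_\infty \leq C(2+\eta)\lambda\sigma^*$. Finally, support recovery follows by the usual thresholding argument, as in \citet[Cor. 4.1]{Lounici_Pontil_Tsybakov_vandeGeer09}: under the strong-signal hypothesis, every $j \in \cS^*$ satisfies $|\hat\beta_j| \geq |\beta_j^*| - \normin{\Delta}_\infty > C(2+\eta)\lambda\sigma^*$, while every $j \notin \cS^*$ satisfies $|\hat\beta_j| = |\Delta_j| \leq C(2+\eta)\lambda\sigma^*$, so the threshold at $C(2+\eta)\lambda\sigma^*$ separates $\cS^*$ from its complement exactly. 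I expect the only genuinely delicate points to be bookkeeping ones: verifying that the subgradient hypothesis of \Cref{lem:bound_delta_psidelta} indeed holds on $\cA$ (which forces the specific pairing of $\cC_1'$ with $\cC_4$), handling the corner case $\hat\varepsilon = 0$ in the optimality step, and propagating the constants so that the final bound reads exactly $C(2+\eta)\lambda\sigma^*$.
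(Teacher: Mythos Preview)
Your proposal is correct and follows exactly the route the paper takes: reuse the multitask argument of \Cref{prop:sqrtmtl_bds_est} with $q=1$, but replace the event control by the sharper single-task concentration $\cC_1'$ (\Cref{lem:control_event_sqrt_lasso}) intersected with $\cC_4$ (\Cref{lem:control_event_chi_square}). You are in fact more explicit than the paper's own proof, which simply asserts that ``all the inequalities leading to \Cref{eq:bound_1_q_psi_delta} still hold'' without spelling out how the constant improves from $(3+\eta)$ to $(2+\eta)$ or why the extra factor~$2$ appears in $C$; your bookkeeping of those two points is accurate.
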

\begin{proof}
    All the inequalities leading to \Cref{eq:bound_1_q_psi_delta} still hold.
    The control of the event $\cA_2$ can be tighter in the single-task case.
    Since $\cA_2 \supset \cC_1' \cup \cC_4$, with $\lambda = A  \sqrt{2\log p / n}$ this leads to:
    \begin{equation}
    \cP(\cA_2) \geq 1 - p^{1 - A^2 / 8} - (1 + e^2) e^{-n /24} \enspace .
    \end{equation}
\end{proof}




\end{document}